\newcommand*{\addFileDependency}[1]{
  \typeout{(#1)}
  \@addtofilelist{#1}
  \IfFileExists{#1}{}{\typeout{No file #1.}}
}
\newcommand{\red}{\textcolor{red}}
\newtheorem{thm}{\textbf{Theorem}}
\newtheorem{theorem}{\textbf{Theorem}}
\newtheorem*{open_problem}{\textbf{Open Problem}}
\newtheorem{corollary}{\textbf{Corollary}}
\newtheorem{definition}{\textbf{Definition}}
\newtheorem{proposition}{\textbf{Proposition}}
\newcommand{\EE}{\mathbb{E}}
\newcommand{\bWW}{\mathbf{W}}
\newcommand{\bb}{\mathbf{b}}
\newcommand{\bx}{\mathbf{x}}
\newcommand{\bv}{\mathbf{v}}
\newcommand{\bu}{\mathbf{u}}
\newcommand{\bA}{\mathbf{A}}
\newcommand{\bQ}{\mathbf{Q}}
\newcommand{\bI}{\mathbf{I}}
\newcommand{\bz}{\mathbf{z}}
\newcommand{\bJ}{\mathbf{J}}
\newcommand{\bW}{\mathbf{W}}
\newcommand{\eps}{\epsilon}
\newcommand{\relu}{\text{relu}}
\setlist[description]{leftmargin=*,labelindent=*}
\setlist[itemize]{leftmargin=*}
\setlist[enumerate]{leftmargin=*}
\title{Improved deterministic $l_2$ robustness on\\ CIFAR-10 and CIFAR-100}
\author{Sahil Singla$^1$, Surbhi Singla$^2$, Soheil Feizi$^1$ \\ University of Maryland, College Park\\
\texttt{\{ssingla,sfeizi\}@umd.edu$^1$, surbhisingla1995@gmail.com$^2$}
}
\begin{document}

\maketitle

\begin{abstract}
Training convolutional neural networks (CNNs) with a strict Lipschitz constraint under the $l_{2}$ norm is useful for provable adversarial robustness, interpretable gradients and stable training. While $1$-Lipschitz CNNs can be designed by enforcing a $1$-Lipschitz constraint on each layer, training such networks requires each layer to have an orthogonal Jacobian matrix (for all inputs) to prevent the gradients from vanishing during backpropagation. A layer with this property is said to be Gradient Norm Preserving (GNP). In this work, we introduce a procedure to certify the robustness of $1$-Lipschitz CNNs by relaxing the orthogonalization of the last linear layer of the network that significantly advances the state of the art for both standard and provable robust accuracies on CIFAR-100 (gains of $4.80\%$ and $4.71\%$, respectively). We further boost their robustness by introducing (i) a novel Gradient Norm preserving activation function called the Householder activation function (that includes every $\mathrm{GroupSort}$ activation) and (ii) a certificate regularization. On CIFAR-10, we achieve significant improvements over prior works in provable robust accuracy ($5.81\%$) with only a minor drop in standard accuracy ($-0.29\%$). Code for reproducing all experiments in the paper is available at \url{https://github.com/singlasahil14/SOC}.

\end{abstract}

\section{Introduction}
Given a neural network $f: \mathbb{R}^{d} \to \mathbb{R}^{k}$, the Lipschitz constant\footnote{Unless specified, we assume the Lipschitz constant under the $l_{2}$ norm in this work.} $\mathrm{Lip}(f)$ enforces an upper bound on how much the output is allowed to change in proportion to a change in the input. Previous work has demonstrated that a small Lipschitz constant is useful for improved adversarial robustness \citep{42503, cisseparseval2017}, generalization bounds \citep{Bartlettgeneralization, long2019sizefree}, interpretable gradients \citep{tsipras2019robustness} and Wasserstein distance estimation \citep{villani2012optimal}. $\mathrm{Lip}(f)$ also upper bounds the increase in the norm of gradient during backpropagation and can thus prevent gradient explosion during training, enabling us to train very deep networks \citep{xiao2018dynamical}. While heuristic methods to enforce Lipschitz constraints \citep{miyato2018spectral, gulrajani2017improved} have achieved much practical success, they do not provably enforce a bound on $\mathrm{Lip}(f)$ globally and it remains challenging to achieve similar results when $\mathrm{Lip}(f)$ is provably bounded.

Using the property: $\mathrm{Lip}(g \circ h) \leq \mathrm{Lip}(g)\  \mathrm{Lip}(h)$, the Lipschitz constant of the neural network can be bounded by the product of the Lipschitz constant of all layers. While this allows us to construct $1$-Lipschitz neural networks by constraining each layer to be $1$-Lipschitz, \citet{Anil2018SortingOL} identified a key difficulty with this approach. Because a $1$-Lipschitz layer can only reduce the norm of gradient during backpropagation, backprop through each layer reduces the gradient norm, resulting in small gradient values for layers closer to the input, making training slow and difficult. To address this problem, they introduce Gradient Norm Preserving (GNP) architectures where each layer preserves the gradient norm during backpropagation. This involves constraining the Jacobian of each linear layer to be an orthogonal matrix and using a GNP activation function called $\mathrm{GroupSort}$. $\mathrm{GroupSort}$ activation function \citep{Anil2018SortingOL} first separates the vector of preactivations $\bz \in \mathbb{R}^{m}$ into groups of pre-specified sizes, sorts each group in the descending order and then concatenates these sorted groups. When the group size is $2$, the resulting activation function is called $\mathrm{MaxMin}$.

For $1$-Lipschitz CNNs, the robustness certificate for a sample $\bx$ from class $l$ is computed as $\mathcal{M}_{f}(\bx)/\sqrt{2}$ where $\mathcal{M}_{f}(\bx) = f_{l}(\bx) - \max_{i\neq l} f_{i}(\bx)$. Naturally, larger values of $f_{l}(\bx)$ and smaller values of $\max_{j\neq l} f_{j}(\bx)$ will lead to larger certificates. This requires the last weight matrix in the network, denoted by $\bWW \in \mathbb{R}^{k \times m}$ ($k$ is the number of classes, $m$ is the dimension of the penultimate layer, $m > k$), to enforce the following constraints throughout training: 
$$\forall j,\ \ \|\bWW_{j, :}\|_{2}=1, \qquad\quad i \neq j,\  \ \bWW_{j, :} \perp \bWW_{i, :}$$
where $\bWW_{i, :}$ denotes the $i^{th}$ row of $\bWW$. Now suppose that for some input image with label $l$, we want to update $\bWW$ to increase the logit for the $l^{th}$ class. Since $\|\bWW_{l, :}\|_{2}$ is constrained to be $1$, the logit can only be increased by changing the direction of the vector $\bWW_{l, :}$. Because the other rows $\{\bWW_{i, :},\ i \neq l\}$ are constrained to be orthogonal to $\bWW_{l, :}$, this further requires an update for all the rows of $\bWW$. Thus during training, any update made to learn some class must necessitate the forgetting of information relevant for the other classes. This can be particularly problematic when the number of classes $k$ and thus the number of orthogonality constraints per row (i.e., $k-1$) is large (such as in CIFAR-100).

To address this limitation, we propose to keep the last weight layer of the network {\it unchanged.} But then the resulting function is no longer $1$-Lipschitz and the certificate $\mathcal{M}_{f}(\bx)/\sqrt{2}$ is not valid. Thus, we introduce a new certification procedure that does not require the last weight layer of the network $\bWW$ to be orthogonal. Our certificate is then given by the following equation:
\begin{align*}
    \min_{i \neq l} \frac{f_{l}(\bx) - f_{i}(\bx)}{\|\bW_{l, :} - \bW_{i, :}\|_{2}}
\end{align*}
However, a limitation of using the above certificate is that because the weight layers are completely unconstrained, larger norms of rows (i.e., $\|\bW_{i, :}\|$) can result in larger values of $\|\bW_{l, :} - \bW_{i, :}\|_{2}$ and thus smaller certificate values. To address this limitation, we  normalize all rows to be of unit norm before computing the logits. While this still requires all the rows to be of unit norm, their directions are now allowed to change freely thus preventing the need to update other rows and forgetting of learned information. We show that this provides significant improvements when the number of classes is large. We call this procedure \textit{Last Layer Normalization} (abbreviated as \textit{LLN}). On CIFAR-100, this significantly improves both the standard ($> 3\%$) and provable robust accuracy ($> 4\%$ at $\rho=36/255$) across multiple $1$-Lipschitz CNN architectures (Table \ref{table:provable_robust_acc_cifar100_main}). Here, $\rho$ is the $l_2$ attack radius.

Another limitation of existing $1$-Lipschitz CNNs \citep{li2019lconvnet, trockman2021orthogonalizing, singlafeizi2021} is that their robustness guarantees do not scale properly with the $l_{2}$ radius $\rho$. For example, the provable robust accuracy of \citep{singlafeizi2021} drops $\sim 30\%$ at $\rho=108/255$ compared to $36/255$ on CIFAR-10 (Table \ref{table:provable_robust_acc_cifar10_main}). To address this limitation, we introduce a certificate regularization denoted by CR (Section \ref{sec:cert_reg}) that when used along with the Householder activation results in significantly improved provable robust accuracy at larger radius values with minimal loss in standard accuracy. On the CIFAR-10 dataset, we achieve significant improvements in the provable robust accuracy for large $\rho=108/255$ (min gain of $+4.96\%$) across different architectures with minimal loss in the standard accuracy (max drop of $-0.56\%$). Results are in Table \ref{table:provable_robust_acc_cifar10_main}. 

Additionally, we characterize the $\mathrm{MaxMin}$ activation function as a special case of the more general \emph{Householder} ($\mathrm{HH}$) activations.
Recall that given $\bz \in \mathbb{R}^{m}$, the $\mathrm{HH}$ transformation is a linear function reflecting $\bz$ about the hyperplane $\bv^{T}\bx = 0$ ($\|\bv\|_{2}=1$), given by $(\bI - 2\bv \bv^{T})\bz$ where $\bI - 2\bv \bv^{T}$ is orthogonal because $\|\bv\|_{2}=1$. The \emph{Householder} activation function $\sigma_{\bv}$ is defined below:
\begin{align}
\sigma_{\bv}(\bz) = 
\begin{cases}
\bz,\qquad \qquad \qquad \ \ \  \bv^{T}\bz > 0, \\
(\bI - 2\bv\bv^{T})\bz,\qquad \bv^{T}\bz \leq 0. \\
\end{cases} \label{eq:intro_hh_ndim} 
\end{align}
First, note that since $\bz = (\bI - 2\bv\bv^{T})\bz$ along $\bv^{T}\bz = 0$, $\sigma_{\bv}$ is continuous. Moreover, the Jacobian $\nabla_{\bz}\ \sigma_{\bv}$ is either $\bI$ or $\bI - 2\bv\bv^{T}$ (both orthogonal) implying $\sigma_{\bv}$ is GNP. Since these properties hold $\forall\ \bv: \|\bv\|_{2}=1$, $\bv$ can be learned during the training. In fact, we prove that any GNP piecewise linear function that changes from $\bQ_{1}\bz$ to $\bQ_{2}\bz$ ($\bQ_{1}, \bQ_{2}$ are square orthogonal matrices) along $\bv^{T}\bz=0$ must satisfy $\bQ_{2} = \bQ_{1}\left( \bI - 2\bv\bv^{T}\right)$ to be continuous (Theorem \ref{thm:hh_iff}). Thus, this characterization proves that every $\mathrm{GroupSort}$ activation is a special case of the more general Householder activation function (example in Figure \ref{fig:hh_intro}, discussion in Section \ref{sec:hh_act}).

In summary, in this paper, we make the following contributions:

\begin{itemize}
    \item We introduce a certification procedure {\it without} orthogonalizing the last linear layer called \textit{Last Layer Normalization}. This procedure significantly enhances the standard and provable robust accuracy when the number of classes is large. Using the LipConvnet-15 network on CIFAR-100, our modification achieves a gain of $+4.71\%$ in provable robust accuracy (at $\rho=36/255$) with a gain of $+4.80\%$ in standard accuracy (Table \ref{table:provable_robust_acc_cifar100_main}).
    \item We introduce a {\it Certificate Regularizer} that significantly advances the provable robust accuracy with a small reduction in standard accuracy. Using LipConvnet-15 network on CIFAR-10, we achieve $+5.81\%$ improvement in provable robust accuracy (at $\rho=108/255$) with only a $-0.29\%$ drop in standard accuracy over the existing methods (Table \ref{table:provable_robust_acc_cifar10_main}).
    \item We introduce a class of piecewise linear GNP activation functions called {\it Householder} or $\mathrm{HH}$ activations. We show that the $\mathrm{MaxMin}$ activation is a special case of the $\mathrm{HH}$ activation for certain settings. We prove that Householder transformations are \textit{necessary} for any GNP piecewise linear function to be continuous (Theorem \ref{thm:hh_iff}).
\end{itemize}


\section{Related work}\label{sec:related_work}
\textbf{Provably Lipschitz convolutional neural networks}:
The class of fully connected neural networks (FCNs) which are Gradient Norm Preserving (GNP) and 1-Lipschitz were first introduced by \citet{Anil2018SortingOL}. They orthogonalize weight matrices 
and use $\mathrm{GroupSort}$ as the activation function to design each layer to be GNP. 
While there have been numerous works on enforcing Lipschitz constraints on convolution layers \citep{cisseparseval2017, Tsuzuku2018LipschitzMarginTS, qian2018lnonexpansive, gouk2020regularisation, sedghi2018singular}, they either enforce loose Lipschitz bounds or are not scalable to large networks. To ensure that the Lipschitz constraint on convolutional layers is tight, multiple recent works try to construct convolution layers with an orthogonal Jacobian matrix \citep{li2019lconvnet, trockman2021orthogonalizing, singlafeizi2021}. These approaches avoid the aforementioned issues and allow the training of large, provably $1$-Lipschitz CNNs while achieving impressive results. 


\textbf{Provable defenses against adversarial examples}: A provably robust classifier is one for which we can guarantee that the classifier's prediction remains constant within some region around the input. Most of the existing methods for provable robustness either bound the Lipschitz constant of the neural network or the individual layers \citep{weng2018CertifiedRobustness, zhang2018recurjac, zhang2018crown, Wong2018ScalingPA, Wong2017ProvableDA, Raghunathan2018SemidefiniteRF, croce2019provable, Singh2018FastAE, 2020curvaturebased, linfty2021, zhang2022boosting, wang2021betacrown, huang2021training}. However, these methods do not scale to large and practical networks on the ImageNet dataset \citep{5206848}. To scale to such large networks, randomized smoothing \citep{Liu2018TowardsRN, Cao2017MitigatingEA, Lcuyer2018CertifiedRT, Li2018CertifiedAR, Cohen2019CertifiedAR, Salman2019ProvablyRD, levine2019certifiably, confidence_cert2020, cursedimensionalitykumar20} has been proposed as a \textit{probabilistically certified defense}. However, certifying robustness with high probability requires generating a large number of noisy samples leading to high inference-time computation. In contrast, the defense we propose is deterministic and hence not comparable to randomized smoothing. While \citet{levinefeizi2021} provide deterministic robustness certificates using randomized smoothing, their certificates are in the $l_{1}$ norm and not directly applicable for the $l_{2}$ threat model studied in this work. We discuss the differences between $l_{1}$ and $l_{2}$ certificates in Appendix Section \ref{sec:norm_differences}.


\section{Problem setup and Notation}\label{sec:problem_setup} 
For a vector $\bv$, $\bv_{j}$ denotes its $j^{th}$ element. For a matrix $\bA$, $\bA_{j,:}$ and $\bA_{:,k}$ denote the $j^{th}$ row and $k^{th}$ column respectively. Both $\bA_{j,:}$ and $\bA_{:,k}$ are assumed to be column vectors (thus $\bA_{j,:}$ is the transpose of $j^{th}$ row of $\bA$). $\bA_{j,k}$ denotes the element in $j^{th}$ row and $k^{th}$ column of $\bA$. 
$\bA_{:j,:k}$ denotes the matrix containing the first $j$ rows and $k$ columns of $\bA$. The same rules are directly extended to higher order tensors. $\bI$ denotes the identity matrix, $\mathbb{R}$ to denote the field of real numbers. For $\theta \in \mathbb{R}$, $\bJ^{+}(\theta)$ and $\bJ^{-}(\theta)$ denote the orthogonal matrices with determinants $+1$ and $-1$ defined as follows:
\begin{align}
    \bJ^{+}(\theta) = \begin{bmatrix}
    \cos \theta & \sin \theta \\
    -\sin \theta & \cos \theta \\
    \end{bmatrix} \qquad \qquad 
    \bJ^{-}(\theta) = \begin{bmatrix}
    \cos \theta & \sin \theta \\
    \sin \theta & -\cos \theta \\
    \end{bmatrix} \label{eq:J_plus_minus_defs}
\end{align}

We construct a $1$-Lipschitz neural network, $f: \mathbb{R}^{d} \to \mathbb{R}^{k}$ ($d$ is the input dimension, $k$ is the number of classes) by composing $1$-Lipschitz convolution layers and GNP activation functions. To certify robustness for some input $\bx$ with prediction $l$, we first define the margin of prediction: $\mathcal{M}_f(\bx) = \max\left(0, f_{l}(\bx)-\max_{i \neq l} f_{i}(\bx) \right)$ where $f_{i}(\bx)$ is the logit for class $i$ and $l$ is the correct label. Using Theorem 7 in \citet{li2019lconvnet}, we can derive the robustness certificate (in the $l_{2}$ norm) as $\mathcal{M}_f(\bx)/\sqrt{2}$. Thus, the $l_{2}$ distance of $\bx$ to the decision boundary is lower bounded by $\mathcal{M}_f(\bx)/\sqrt{2}$: 
\begin{align}
    \min_{i \neq l} \min_{\ f_{i}(\bx^{*}) = f_{l}(\bx^{*})} \|\bx^{*} - \bx\|_{2}\ \geq \frac{\mathcal{M}_f(\bx)}{\sqrt{2}} \label{eq:main_cert}
\end{align}
We often use the abbreviation $f_{i} - f_{j}: \mathbb{R}^{D} \to \mathbb{R}$ to denote the function so that: 
$$\left(f_{i} - f_{j}\right)(\bx) = f_{i}(\bx) - f_{j}(\bx),\qquad \forall\ \bx \in \mathbb{R}^{D} $$
Our goal is to train the neural network $f$ to achieve the maximum possible provably robust accuracy while also simultaneously improving (or maintaining) standard accuracy.



\section{Last Layer Normalization}\label{sec:lln}
To ensure that the network is $1$-Lipschitz so that the certificate in equation \eqref{eq:main_cert} is valid, existing $1$-Lipschitz neural networks require the weight matrices of all the linear layers of the network to be orthogonal. For the weight matrix in the last layer of the network (that maps the penultimate layer neurons to the logits), $\bWW \in \mathbb{R}^{k \times m}$ ($k$ is the number of classes, $m$ is the dimension of the penultimate layer, $m > k$), this enforces the following constraints on each row $\bWW_{i, :} \in \mathbb{R}^{m}$: 
\begin{align}
\forall j,\ \ \|\bWW_{j, :}\|_{2}=1, \qquad\quad \forall i \neq j,\  \ \bWW_{j, :} \perp \bWW_{i, :} \label{eq:orthogonality}
\end{align}
Now suppose that for some input $\bx$ with label $l$, we want to update $\bWW$ to increase the logit for the $l^{th}$ class. Since $\|\bWW_{l, :}\|_{2}$ is constrained to be $1$, the gradient update can only change the direction of the vector $\bWW_{l, :}$. But now, because the other rows $\{\bWW_{i, :},\ i \neq l\}$ are constrained to be orthogonal to $\bWW_{l, :}$, this further requires an update for all the rows of $\bWW$. 
This has the negative effect that during training, any update made to learn some class must necessitate the forgetting of information relevant for the other classes. This can be particularly problematic when the number of classes $k$ is large (such as in CIFAR-100) and thus the number of orthogonality constraints per row i.e. $k-1$ is large.

To address this limitation, first observe that the neural network from the input layer to the penultimate layer (i.e excluding the last linear layer) is $1$-Lipschitz. Let $g: \mathbb{R}^{d} \to \mathbb{R}^{m}$ be this function so that $f(\bx) = \bW g(\bx) + \bb$. This equation suggests that even if $\bWW$ is not orthogonal, the Lipschitz constant of the function $f_{l} - f_{i}$, can be computed by multiplying the Lipschitz constant of $g$ (which is $1$) and that of $\left(\bW_{l, :} - \bW_{i, :}\right)$ (which is $\|\bW_{l, :} - \bW_{i, :}\|_{2}$). The robustness certificate can then be computed as $\mathcal{M}_f(\bx)/\|\bW_{l, :} - \bW_{i, :}\|_{2}$. This procedure leads to the following proposition: 

\begin{proposition}\label{thm:last_linear}
Given $1$-Lipschitz continuous function $g: \mathbb{R}^{d} \to \mathbb{R}^{m}$ and $\bW \in \mathbb{R}^{k \times m},\  \bb \in \mathbb{R}^{k}$ ($k$ is the number of classes), construct a new function $f: \mathbb{R}^{d} \to \mathbb{R}^{k}$ defined as: $f(\bx) = \bW g(\bx) + \bb$. Let $f_{l}(\bx) > \max_{i \neq l} f_{i}(\bx)$. The robustness certificate (under the $l_{2}$ norm) is given by:
\begin{align*}
    \min_{i \neq l} \min_{\ f_{l}(\bx^{*}) = f_{i}(\bx^{*})} \|\bx^{*} - \bx\|_{2}\ \geq \min_{i \neq l} \frac{f_{l}(\bx) - f_{i}(\bx)}{\|\bW_{l, :} - \bW_{i, :}\|_{2}}
\end{align*}
\end{proposition}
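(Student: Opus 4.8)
The plan is to reduce the multi-class certificate to a collection of two-class (binary) certificates, one for each competing class $i \neq l$, and then bound each of these using the Lipschitz constant of the scalar function $f_l - f_i$. First I would fix an arbitrary $i \neq l$ and consider the scalar function $h_i := f_l - f_i : \mathbb{R}^d \to \mathbb{R}$. By hypothesis $f(\bx) = \bW g(\bx) + \bb$, so $h_i(\bx) = (\bW_{l,:} - \bW_{i,:})^T g(\bx) + (\bb_l - \bb_i)$. Since $g$ is $1$-Lipschitz and $\bu \mapsto (\bW_{l,:}-\bW_{i,:})^T \bu$ is a linear map with operator norm $\|\bW_{l,:}-\bW_{i,:}\|_2$, the composition property $\mathrm{Lip}(a\circ b)\le \mathrm{Lip}(a)\,\mathrm{Lip}(b)$ gives $\mathrm{Lip}(h_i) \le \|\bW_{l,:}-\bW_{i,:}\|_2$.

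Next I would run the standard margin-to-distance argument on $h_i$. Suppose $\bx^*$ satisfies $f_l(\bx^*) = f_i(\bx^*)$, i.e. $h_i(\bx^*) = 0$. Then by the Lipschitz bound,
\begin{align*}
    h_i(\bx) = h_i(\bx) - h_i(\bx^*) \le \mathrm{Lip}(h_i)\,\|\bx^* - \bx\|_2 \le \|\bW_{l,:}-\bW_{i,:}\|_2\,\|\bx^* - \bx\|_2,
\end{align*}
and since $h_i(\bx) = f_l(\bx) - f_i(\bx) > 0$ we may divide to obtain
\begin{align*}
    \|\bx^* - \bx\|_2 \ge \frac{f_l(\bx) - f_i(\bx)}{\|\bW_{l,:}-\bW_{i,:}\|_2}.
\end{align*}
Taking the infimum over all such $\bx^*$ (i.e. over the decision surface $\{f_l = f_i\}$) preserves the inequality, so $\min_{f_l(\bx^*)=f_i(\bx^*)}\|\bx^*-\bx\|_2 \ge (f_l(\bx)-f_i(\bx))/\|\bW_{l,:}-\bW_{i,:}\|_2$.

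Finally I would take the minimum over $i \neq l$ on both sides. The left-hand side becomes $\min_{i\neq l}\min_{f_l(\bx^*)=f_i(\bx^*)}\|\bx^*-\bx\|_2$, which is exactly the distance to the full decision boundary of the classifier (any point where $\bx$ would be misclassified lies on $\{f_l = f_i\}$ for some $i \neq l$, or strictly inside a region where $f_i \ge f_l$; continuity of each $f_i$ and the intermediate value theorem along the segment from $\bx$ to any such point guarantees a crossing point with $f_l = f_i$, so restricting to the surfaces loses nothing). The right-hand side is the claimed quantity $\min_{i\neq l}(f_l(\bx)-f_i(\bx))/\|\bW_{l,:}-\bW_{i,:}\|_2$, completing the proof. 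I expect the only mildly delicate point to be this last justification — that minimizing over the pairwise level sets $\{f_l = f_i\}$ genuinely captures the distance to the decision boundary — which rests on continuity of $f$ and an intermediate-value argument along line segments; everything else is a routine application of the Lipschitz composition rule already invoked in the excerpt.
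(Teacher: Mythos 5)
Your proposal is correct and follows essentially the same route as the paper: bound the Lipschitz constant of $f_l - f_i$ by $\|\bW_{l,:}-\bW_{i,:}\|_2$ (via $1$-Lipschitzness of $g$ and the linear last layer), apply the standard margin-to-distance argument on each pairwise level set $\{f_l = f_i\}$, and take the minimum over $i \neq l$. The only difference is cosmetic: the paper bounds $\|\nabla_{\bx}(f_l - f_i)\|_2$ by the chain rule, whereas you use the metric Lipschitz composition inequality directly, which slightly tightens the argument by not presuming differentiability of $g$.
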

Proof is in Appendix Section \ref{proof:last_linear}. 

However, in our experiments, we found that using this procedure directly i.e without any constraint on the weight matrix $\bWW$ often results in large norms of row vectors $\|\bW_{i, :}\|_{2}$, and thus large values of $\|\bW_{l, :} - \bW_{i, :}\|_{2}$ and smaller certificates (Theorem \ref{thm:last_linear}). To address this problem, we normalize all rows of the matrix to be of unit norm before computing the logits so that for the input $\bx$, the logit $g_{i}(\bx)$ can be computed as follows:
\begin{align*}
    g_{i}(\bx) = \frac{\left(\bWW_{i,:}\right)^{T}f(\bx)}{\|\bWW_{i,:}\|_{2}} + \bb_{i} 
\end{align*}
The robustness certificate can then be computed as follows: \begin{align*}
    \min_{i \neq l} \frac{g_{l}(\bx) - g_{i}(\bx)}{\|\bWW^{(n)}_{l, :} - \bWW^{(n)}_{i, :}\|_{2}},\qquad \text{where }\forall j,\ \ \bWW^{(n)}_{j,:} = \frac{\bWW_{j,:}}{\|\bWW_{j,:}\|_{2}},\ g_{j}(\bx) = \left(\bWW^{(n)}_{j,:}\right)^{T}f(\bx) + \bb_{j} 
\end{align*}
While each row $\bWW_{i,:}$ is still constrained to be of unit norm, unlike with orthogonality constraints (equation \eqref{eq:orthogonality}), their directions are allowed to change freely. This provides significant improvements when the number of classes is large. We call this procedure \textit{Last Layer Normalization} (\textit{LLN}). 

\section{Certificate Regularization}\label{sec:cert_reg}
A limitation of using cross entropy loss for training $1$-Lipschitz CNNs is that it is not explicitly designed to maximize the margin $\mathcal{M}_f(\bx)$ and thus, the robustness certificate. That is, once the cross entropy loss becomes small, the gradients will no longer try to further increase the margin $\mathcal{M}_f(\bx)$ even though the network may have the capacity to learn bigger margins. 

To address this limitation, we can simply \textit{subtract} the certificate i.e $-\mathcal{M}_f(\bx)/\sqrt{2}$ from the usual cross entropy loss function during training. Observe that we subtract the certificate because we want to \textit{maximize} the certificate values while \textit{minimizing} the cross entropy loss. However, in our experiments we found that this regularization term excessively penalizes the network for the misclassified examples and as a result, the certificate values for the correctly classified inputs are not large. Thus, we propose to use the following regularized loss function during training: 
\begin{align}
\min_{\Omega}\quad \EE_{(\bx,l) \sim D}\left[\ell\left(f_{\Omega}(\bx), l \right) - \gamma\ \relu\left(\frac{\mathcal{M}_f(\bx)}{\sqrt{2}}\right)\right] \label{eq:loss_func}
\end{align}
In the above equation, $f_{\Omega}$ denotes the $1$-Lipschitz neural network parametrized by $\Omega$, $f_{\Omega}(\bx)$ denotes the logits for the input $\bx$,\  $\ell\left(f_{\Omega}(\bx), l \right)$ is the cross entropy loss for input $\bx$ with label $l$ and $\gamma > 0$ is the regularization coefficient for maximizing the certificate. We have the minus sign in front of the regularization term $\gamma\ \relu(\mathcal{M}_f(\bx)/\sqrt{2})$ because we want to \emph{maximize} the certificate while \emph{minimizing} the cross entropy loss. For wrongly classified inputs, $\mathcal{M}_f(\bx)/\sqrt{2} < 0 \implies \relu(\mathcal{M}_f(\bx)/\sqrt{2})=0$. 
This ensures that the optimization tries to increase the certificates \emph{only} for the correctly classified inputs. We call the above mentioned procedure \textit{Certificate Regularization} (abbreviated as \textit{CR}). 

\begin{figure*}[t]
\centering
\begin{subfigure}{0.45\linewidth}
\centering
\includegraphics[trim=7cm 0cm 7cm 0cm, clip, width=\linewidth]{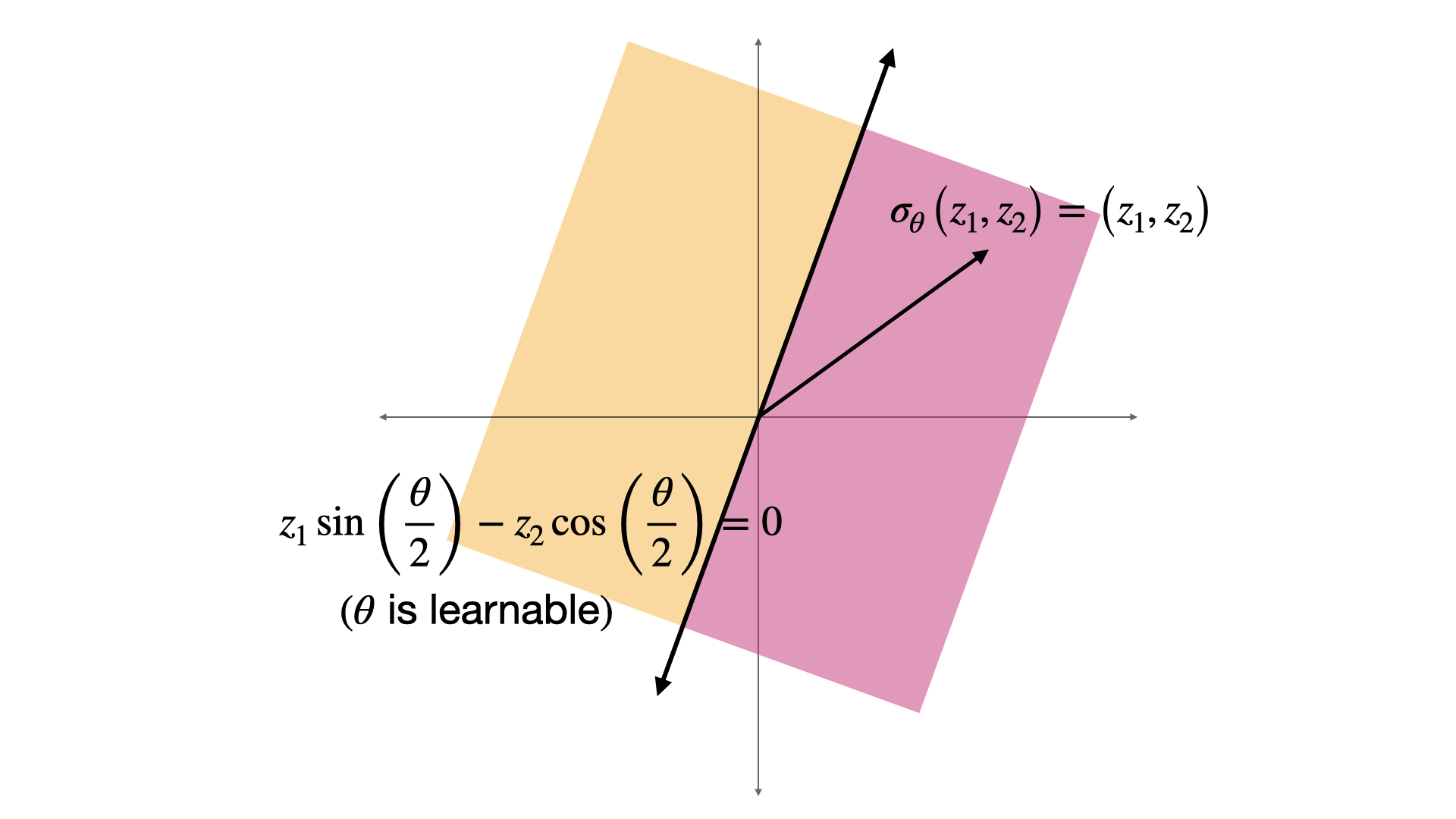}
\caption{Case 1: $z_1\sin(\theta/2) - z_2\cos(\theta/2) > 0$}
\label{subfig:case_id}
\end{subfigure}\quad
\begin{subfigure}{0.45\linewidth}
\centering
\includegraphics[trim=7cm 0cm 7cm 0cm, clip, width=\linewidth]{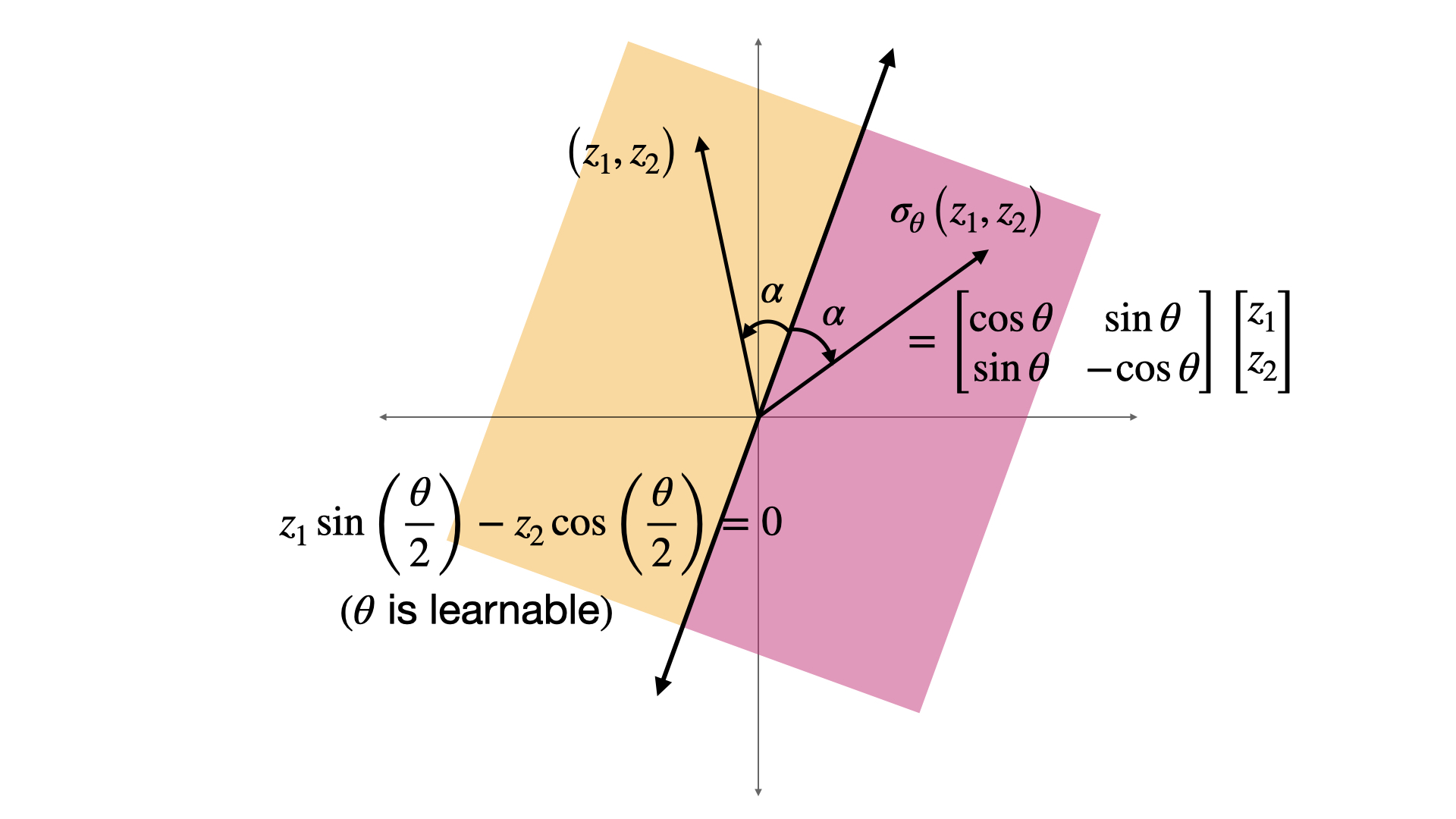}
\caption{Case 2: $z_1\sin(\theta/2) - z_2\cos(\theta/2) \leq 0$}
\label{subfig:case_non_id}
\end{subfigure}
\caption{
Illustration of the Householder activation, $\sigma_{\theta}$. In each colored region, $\sigma_{\theta}$ is linear. The Jacobian is $\bI$ when $(z_1, z_2)$ lies in the pink region (Case 1) and $\bI - 2\bv\bv^{T}$ in the other region (Case 2) where $\bv = \begin{bmatrix} \sin(\theta/2) & -\cos(\theta/2)\end{bmatrix}^{T}$. Both of these matrices are orthogonal implying $\sigma_{\theta}$ is GNP. }
\label{fig:hh_intro}
\end{figure*}

\section{Householder Activation Functions}\label{sec:hh_act}
Recall that given $\bz \in \mathbb{R}^{m}$, the Householder ($\mathrm{HH}$) transformation reflects $\bz$ about the hyperplane $\bv^{T}\bx=0$ where $\|\bv\|_{2}=1$. The linear transformation is given by the equation $(\bI - 2\bv \bv^{T})\bz$ where $\bI - 2\bv \bv^{T}$ is orthogonal because $\|\bv\|_{2}=1$. Now, consider the nonlinear function $\sigma_{\bv}$ defined below:
\begin{definition}{(\textbf{Householder Activation of Order $1$})}\label{def:hh_basic} The activation function $\sigma_{\bv}: \mathbb{R}^{m} \to \mathbb{R}^{m}$, applied on $\bz\in \mathbb{R}^{m}$, is called the $m$-dimensional Householder Activation of Order $1$:
\begin{align}
\sigma_{\bv}(\bz) = 
\begin{cases}
\bz,\qquad \qquad \qquad \ \ \  \bv^{T}\bz > 0, \\
(\bI - 2\bv\bv^{T})\bz,\qquad \bv^{T}\bz \leq 0. \\
\end{cases} \label{eq:vanilla_hh_ndim}
\end{align}
\end{definition}
Since $\sigma_{\bv}$ is linear when $\bv^{T}\bz > 0$ or $\bv^{T}\bz < 0$, it is also continuous in both cases. At the hyperplane separating the two cases (i.e., $\bv^{T}\bz=0$) we have: $(\bI - 2\bv\bv^{T})\bz = \bz - 2(\bv^{T}\bz) \bv = \bz$ (both linear functions are equal). Thus, $\sigma_{\bv}$ is continuous $\forall\ \bz \in \mathbb{R}^{m}$. Moreover, the Jacobian is either $\bI$ or $\bI - 2\bv\bv^{T}$ which are both square orthogonal matrices. Thus, $\sigma_{\bv}$ is also GNP and $1$-Lipschitz. Since these properties hold for all $\bv$ satisfying $\|\bv\|_{2}=1$, $\bv$ can be made a learnable parameter.  

While the above arguments suggest that $\mathrm{HH}$ transformations are \emph{sufficient} to ensure such functions are continuous, we also prove that they are \emph{necessary}. That is, we prove that if a GNP piecewise linear function $g: \mathbb{R}^{m} \to \mathbb{R}^{m}$ transitions between different linear functions $\bQ_{1}\bz$ and $\bQ_{2}\bz$ (in an open set $S \subset \mathbb{R}^{m}$) along a hyperplane $\bv^{T}\bz = 0$ (where $\|\bv\|_{2}=1$), then $g$ is continuous in $S$ \emph{if and only if} $\bQ_{2} = \bQ_{1}(\bI - 2\bv\bv^{T})$. This theoretical result provides a general principle for designing piecewise linear GNP activation functions. The formal result is stated in the following Theorem:




\begin{theorem}\label{thm:hh_iff}
Given an open set $S \subset \mathbb{R}^{m}$, orthogonal square matrices $\bQ_{1} \neq \bQ_{2}$, and vector $\bv \in \mathbb{R}^{m}$\ $\left(\|\bv\|_{2}=1\right)$ such that $S \cap \{\bz: \bv^{T}\bz = 0\} \neq \emptyset$, the function $g$ defined as follows:
\begin{align}
g(\bz) = 
\begin{cases}
\bQ_{1}\bz,\qquad \bz \in S, \bv^{T}\bz > 0, \\
\bQ_{2}\bz,\qquad \bz \in S, \bv^{T}\bz \leq 0 \\
\end{cases} \label{eq:hh_iff_def_main}
\end{align}
is continuous in $S$ if and only if  $\bQ_{2} = \bQ_{1}(\bI - 2\bv\bv^{T})$.
\end{theorem}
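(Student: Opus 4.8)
The plan is to prove both directions by working on the separating hyperplane $H = S \cap \{\bz : \bv^{T}\bz = 0\}$ and then extending to all of $S$. The key observation is that continuity of the piecewise-defined $g$ is entirely a question of whether the two linear pieces $\bQ_{1}\bz$ and $\bQ_{2}\bz$ agree on $H$, since each piece is linear (hence continuous) on the open region where it is active, and the only possible discontinuities occur on $H$.

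For the ``if'' direction, I would start from the identity $(\bI - 2\bv\bv^{T})\bz = \bz - 2(\bv^{T}\bz)\bv$. For any $\bz$ with $\bv^{T}\bz = 0$ this gives $(\bI - 2\bv\bv^{T})\bz = \bz$, so $\bQ_{1}(\bI - 2\bv\bv^{T})\bz = \bQ_{1}\bz$; hence if $\bQ_{2} = \bQ_{1}(\bI - 2\bv\bv^{T})$ then $\bQ_{2}\bz = \bQ_{1}\bz$ on $H$. Since the two pieces agree on the shared boundary and each is continuous on its (relatively) open piece, $g$ is continuous on all of $S$ — I would spell this out with a short $\epsilon$–$\delta$ / sequential argument: any $\bz_{0} \in H$ is approached by sequences from either side, and the limit along each side equals $\bQ_{1}\bz_{0} = \bQ_{2}\bz_{0}$.

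For the ``only if'' direction — which I expect to be the main obstacle, and where the hypotheses $S$ open, $H \neq \emptyset$, and $\bQ_{1} \neq \bQ_{2}$ all get used — suppose $g$ is continuous on $S$. Continuity forces $\bQ_{1}\bz = \bQ_{2}\bz$ for every $\bz \in H$ (approach $\bz$ from the $\bv^{T}\bz > 0$ side using openness of $S$ to guarantee such a sequence exists), i.e. $(\bQ_{1} - \bQ_{2})\bz = \mathbf{0}$ on $H$. The subtle point is that $H$ is a \emph{relatively} open piece of the hyperplane $\{\bv^{T}\bz = 0\}$, not the whole hyperplane; but a nonempty relatively open subset of an $(m-1)$-dimensional subspace contains $m-1$ linearly independent vectors (take a point $\bz_{0} \in H$ and small perturbations within the hyperplane, which stay in $S$ by openness), so the kernel of $\bQ_{1} - \bQ_{2}$ contains the entire hyperplane $\bv^{\perp}$. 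Therefore $\bQ_{1} - \bQ_{2}$ vanishes on an $(m-1)$-dimensional space, so its rank is at most $1$, and there is a vector $\bu$ with $(\bQ_{1} - \bQ_{2})\bz = (\bv^{T}\bz)\bu$ for all $\bz \in \RR^{m}$ (match the action on $\bv$ and on $\bv^{\perp}$). Thus $\bQ_{2} = \bQ_{1} - \bu\bv^{T}$, i.e. $\bQ_{1}^{T}\bQ_{2} = \bI - (\bQ_{1}^{T}\bu)\bv^{T} =: \bI - \bw\bv^{T}$ with $\bw = \bQ_{1}^{T}\bu$. Now I would use orthogonality: $\bQ_{1}^{T}\bQ_{2}$ is orthogonal, so $(\bI - \bw\bv^{T})^{T}(\bI - \bw\bv^{T}) = \bI$, which expands to $-\bv\bw^{T} - \bw\bv^{T} + \|\bw\|_{2}^{2}\,\bv\bv^{T} = \mathbf{0}$. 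Applying this to $\bv$ gives $-\bw - (\bv^{T}\bw)\bv + \|\bw\|_{2}^{2}\bv = \mathbf{0}$, so $\bw$ is a scalar multiple of $\bv$, say $\bw = c\bv$; substituting back gives $-2c\,\bv\bv^{T} + c^{2}\,\bv\bv^{T} = \mathbf{0}$, hence $c^{2} = 2c$, so $c = 0$ or $c = 2$. Since $\bQ_{1} \neq \bQ_{2}$ we have $\bw \neq \mathbf{0}$, ruling out $c = 0$; thus $c = 2$, giving $\bQ_{1}^{T}\bQ_{2} = \bI - 2\bv\bv^{T}$, i.e. $\bQ_{2} = \bQ_{1}(\bI - 2\bv\bv^{T})$ as claimed. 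The one place demanding care is the rank-one / ``relatively open subset spans the hyperplane'' step, and the bookkeeping to ensure the perturbed points actually remain in $S$; everything after that is the linear-algebra computation sketched above.
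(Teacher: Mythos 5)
Your proposal is correct and follows essentially the same route as the paper's proof: the ``if'' direction by checking the two linear pieces agree on the hyperplane $\bv^{T}\bz=0$, and the ``only if'' direction by showing $\bQ_{1}-\bQ_{2}$ has the hyperplane in its kernel, hence is rank one of the form $\bu\bv^{T}$, and then using orthogonality to force the scalar to be $2$ (the paper's $\lambda=-2$ with its opposite sign convention). Your treatment is in fact slightly more careful than the paper's at one point, namely in justifying that agreement on the \emph{relatively open} set $S\cap\{\bv^{T}\bz=0\}$ (rather than the full hyperplane) already forces the kernel of $\bQ_{1}-\bQ_{2}$ to contain all of $\bv^{\perp}$.
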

Proof of Theorem \ref{thm:hh_iff} is in Appendix \ref{proof:hh_iff}. Note that since the matrix $\bI - 2\bv\bv^{T}$ has determinant $-1$, the above theorem necessitates that $\det(\bQ_{1}) = -\det(\bQ_{2})$ i.e the determinant of the Jacobian must change sign whenever the Jacobian of a piecewise linear GNP activation function changes.

Recall that for the $\mathrm{MaxMin}$ activation function, $\mathrm{MaxMin}(z_1, z_2) = (z_1, z_2)$ if $z_1 > z_2$ and $(z_2, z_1)$ otherwise. Thus, the Jacobian of $\mathrm{MaxMin}$ for $z_1 > z_2$ case is $\bI = \bJ^{+}(0)$ while for $z_1 \leq z_2$ is $\bJ^{-}(\pi/2)$. Using Theorem \ref{thm:hh_iff}, we can easily prove that $\mathrm{MaxMin}$ is a special case of the more general Householder activation functions where the Jacobian $\bJ^{-}(\pi/2)$ is replaced with $\bJ^{-}(\theta)$ and the conditions $z_{1} > z_{2}$ are replaced with $z_{1}\sin(\theta/2) > z_{2}\cos(\theta/2)$ (similarly for $\leq$). The construction of Householder activations in $2$ dimensions, denoted by $\sigma_{\theta}$, is given in the following corollary:

\begin{corollary}\label{cor:hh_dim_2}
The function $\sigma_{\theta}: \mathbb{R}^{2} \to \mathbb{R}^{2}$ defined as 
\begin{align}
\sigma_{\theta}\left(z_1, z_2\right) = \begin{cases}
\begin{bmatrix}
1 & 0\\
0 & 1
\end{bmatrix}\begin{bmatrix}
z_1\\
z_2
\end{bmatrix} \qquad \qquad \qquad \mathrm{if}\ \  z_1\sin\left(\theta/2\right) - z_2\cos\left(\theta/2\right) > 0 \\
\begin{bmatrix}
\cos\theta & \sin\theta\\
\sin\theta & -\cos\theta
\end{bmatrix}\begin{bmatrix}
z_1\\
z_2
\end{bmatrix} \qquad \mathrm{if}\ \  z_1\sin\left(\theta/2\right) - z_2\cos\left(\theta/2\right)  \leq 0 \\
\end{cases}   \label{eq:hh_order_1_2_dim_main}
\end{align}
is continuous and is called $2\mathrm{D}$ Householder Activation of Order $1$.
\end{corollary}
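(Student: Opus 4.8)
The plan is to recognize $\sigma_\theta$ as an instance of the order-$1$ Householder activation in $\RR^2$ and then apply Theorem \ref{thm:hh_iff} with $S = \RR^2$. First I would set $\bQ_1 = \bI$ and $\bQ_2 = \bJ^{-}(\theta)$, which are distinct orthogonal matrices (their determinants are $+1$ and $-1$), and take $\bv = \begin{bmatrix}\sin(\theta/2) & -\cos(\theta/2)\end{bmatrix}^{T}$. One checks immediately that $\|\bv\|_2 = 1$ and that $\bv^{T}\bz = z_1\sin(\theta/2) - z_2\cos(\theta/2)$, so the two branches in \eqref{eq:hh_order_1_2_dim_main} coincide with the branches $\bv^{T}\bz > 0$ and $\bv^{T}\bz \le 0$ of \eqref{eq:hh_iff_def_main}; also $S \cap \{\bz : \bv^{T}\bz = 0\}$ is a line through the origin and hence nonempty. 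Thus all hypotheses of Theorem \ref{thm:hh_iff} are met.

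The remaining step is to verify the single identity $\bQ_2 = \bQ_1(\bI - 2\bv\bv^{T})$, i.e. $\bJ^{-}(\theta) = \bI - 2\bv\bv^{T}$. Expanding $\bI - 2\bv\bv^{T}$ produces a $2\times 2$ matrix with entries $1 - 2\sin^2(\theta/2)$, $2\sin(\theta/2)\cos(\theta/2)$, and $1 - 2\cos^2(\theta/2)$, and the half-angle identities $1 - 2\sin^2(\theta/2) = \cos\theta$, $1 - 2\cos^2(\theta/2) = -\cos\theta$, $2\sin(\theta/2)\cos(\theta/2) = \sin\theta$ turn this into exactly $\bJ^{-}(\theta)$. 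Theorem \ref{thm:hh_iff} then immediately yields continuity of $\sigma_\theta$ on $\RR^2$.

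If one prefers to avoid invoking Theorem \ref{thm:hh_iff}, the same conclusion follows from a direct argument: each branch is linear, hence continuous on its open half-plane, so continuity can only fail on the line $L = \{\bz : z_1\sin(\theta/2) = z_2\cos(\theta/2)\}$, which is spanned by $\bu = (\cos(\theta/2),\sin(\theta/2))^{T}$. On $L$ the identity branch returns $\bu$, and a one-line computation using the angle-subtraction formulas shows $\bJ^{-}(\theta)\bu = (\cos(\theta/2), \sin(\theta/2))^{T} = \bu$, so the two branches agree on $L$ and $\sigma_\theta$ is continuous everywhere.

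I do not expect any genuine obstacle here: the statement is essentially a verification. The only points that require care are choosing the correct unit normal $\bv$ to the decision boundary so that the sign conventions of the two branches line up with those in Theorem \ref{thm:hh_iff}, and the (routine) half-angle manipulation that identifies $\bI - 2\bv\bv^{T}$ with $\bJ^{-}(\theta)$.
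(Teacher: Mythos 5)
Your proposal is correct and takes essentially the same route as the paper's proof: substituting $\bQ_{1}=\bI$ and $\bv = \begin{bmatrix}\sin(\theta/2) & -\cos(\theta/2)\end{bmatrix}^{T}$ into Theorem \ref{thm:hh_iff} and checking via half-angle identities that $\bI - 2\bv\bv^{T} = \bJ^{-}(\theta)$, which is exactly the computation in the paper's appendix (your explicit checks that $\|\bv\|_2=1$, $\bQ_1\neq\bQ_2$, and the hyperplane meets $S=\RR^2$ are appropriate and only implicit there). Your alternative direct verification that both branches agree on the boundary line is also correct but not needed.
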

The two cases are demonstrated in Figure \ref{subfig:case_id} and Figure \ref{subfig:case_non_id}, respectively. Since $\sigma_{\theta}$ is continuous, GNP and $1$-Lipschitz $\forall\ \theta \in \mathbb{R},\ \theta$ is a learnable parameter. For $\theta=\pi/2$ in equation \eqref{eq:hh_order_1_2_dim_main}, $\sigma_{\theta}$ is equivalent to $\mathrm{MaxMin}$. Thus, $\sigma_{\theta}$ is at least as expressive as $\mathrm{MaxMin}$. 



A major limitation of using $\sigma_{\bv}$ (equation \eqref{eq:vanilla_hh_ndim}) directly is that it has only $2$ linear regions and is thus limited in its expressive power. In contrast, $\mathrm{MaxMin}$ first divides the preactivation $\bz \in \mathbb{R}^{m}$ (assuming $m$ is divisible by $2$) into $m/2$ groups of size $2$ each. Since each group has $2$ linear regions, we get $2^{m/2}$ linear regions from the $m/2$ groups. Thus, to increase the expressive power, we similarly divide $\bz$ into $m/2$ groups of size $2$ each and apply the $2$-dimensional Householder activation function of Order $1$ $(\sigma_{\theta})$ to each group resulting in $2^{m/2}$ linear regions (same as $\mathrm{MaxMin}$). 

To apply $\sigma_{\theta}$ to the output of a convolution layer $\bz \in \mathbb{R}^{m \times n \times n}$ ($m$ is the number of channels and $n \times n$ is the spatial size), we first split $\bz$ into $2$ tensors along the channel dimension giving the tensors: $\bz_{:m/2,:,:}$ and $\bz_{m/2:,:,:}$. Each of these tensors is of size $m/2 \times n \times n$ giving $n^{2}m/2$ groups. We use the same $\theta$ for each pair of channels (irrespective of spatial location) resulting in $m/2$ learnable parameters. We initialize each $\theta=\pi/2$ so that $\sigma_{\theta}$ is equivalent to $\mathrm{MaxMin}$ at initialization. 

\section{Experiments}
Our goal is to evaluate the effectiveness of the three changes proposed in this work: (a) Last Layer Normalization, (b) Certificate regularization and (c) Householder activation functions. We perform experiments under the setting of provably robust image classification on CIFAR-10 and CIFAR-100 datasets using the same $1$-Lipschitz CNN architectures used by \citet{singlafeizi2021} (LipConvnet-5, 10, 15, $\dotsc$, 40) due to their superior performance over the prior works. We compare with the three orthogonal convolution layers in the literature: SOC \citep{singlafeizi2021}, BCOP \citep{li2019lconvnet} and Cayley \citep{trockman2021orthogonalizing} using $\mathrm{MaxMin}$ as the activation function. 

We use SOC with $\mathrm{MaxMin}$ as the primary baseline for comparison in the maintext due to their superior performance over the prior works (BCOP, Cayley). Results using BCOP and Cayley convolutions are given in Appendix Sections \ref{appendix:results_cifar10} and \ref{appendix:results_cifar100} for completeness. We use the same implementations for these convolution layers as given in their respective github repositories. We compare the provable robust accuracy using 3 different values of the $l_{2}$ perturbation radius $\rho = 36/255,\ 72/255,\ 108/255$. In both Tables \ref{table:provable_robust_acc_cifar100_main} and \ref{table:provable_robust_acc_cifar10_main}, for all networks, we use SOC as the convolution layer. Using SOC, we achieve the same bound on the approximation error of an orthogonal Jacobian as achieved in \cite{singlafeizi2021} i.e. $2.42 \times 10^{-6}$ across all convolution layers in all networks. Thus, even for the network with $40$ layers, this results in maximum Lipschitz constant of $(1 + 2.42 \times 10^{-6})^{40} = 1.000097 \leq  1.0001$. Thus, the Lipschitz constant across all our networks is bounded by $1.0001$. The symbol $\mathrm{HH}$ (in Tables \ref{table:provable_robust_acc_cifar100_main}, \ref{table:provable_robust_acc_cifar10_main}) is for the $2\mathrm{D}$ Householder Activation of order $1$ or $\sigma_{\theta}$ (defined in equation \eqref{eq:hh_order_1_2_dim_main}). 

All experiments were performed using $1$ NVIDIA GeForce RTX 2080 Ti GPU. All networks were trained for 200 epochs with initial learning rate of 0.1, dropped by a factor of 0.1 after 100 and 150 epochs. For Certificate Regularization (or CR), we set the parameter $\gamma=0.5$.

\begin{table}[t]
    \centering
    \renewcommand{\arraystretch}{1.3} 
    \begin{tabular}{l l|l l l l l}
    \hline
        \multirow{2}{*}{\textbf{Architecture}} & \multirow{2}{*}{\textbf{Methods}} & \multirow{2}{1.5cm}{\textbf{Standard Accuracy}} & \multicolumn{3}{c}{\textbf{Provable Robust Acc.} ($\rho=$)} & \multirow{1}{1.5cm}{\textbf{Increase}}\\ 
         &  &  & 36/255 & 72/255 & 108/255 & (Standard) \\ \hline
        \multirow{4}{*}{LipConvnet-5} & SOC + $\mathrm{MaxMin}$ & 42.71\% & 27.86\% & 17.45\% & 9.99\% & \_ \\
         & \qquad\ \textbf{+ LLN} & 45.86\% & 31.93\% & 21.17\% & 13.21\% & +3.15\% \\
         & \qquad\ \textbf{+} $\mathbf{ HH}$ & \textbf{46.36\%} & 32.64\% & 21.19\% & 13.12\% & +\textbf{3.65\%} \\ 
         & \qquad\ \textbf{+ CR} & 45.82\% & \textbf{32.99\%} & \textbf{22.48\%} & \textbf{14.79\%} & +3.11\% \\ \hline
        \multirow{4}{*}{LipConvnet-10} & SOC + $\mathrm{MaxMin}$ & 43.72\% & 29.39\% & 18.56\% & 11.16\% & \_ \\
         & \qquad\ \textbf{+ LLN} & 46.88\% & 33.32\% & 22.08\% & 13.87\% & +3.16\% \\
         & \qquad\ \textbf{+} $\mathbf{ HH}$ & \textbf{47.96\%} & 34.30\% & 22.35\% & 14.48\% & +\textbf{4.24\%} \\ 
         & \qquad\ \textbf{+ CR} & 47.07\% & \textbf{34.53\%} & \textbf{23.50\%} & \textbf{15.66\%} & +3.35\% \\ \hline
        \multirow{4}{*}{LipConvnet-15} & SOC + $\mathrm{MaxMin}$ & 42.92\% & 28.81\% & 17.93\% & 10.73\% & \_ \\
         & \qquad\ \textbf{+ LLN} & 47.72\% & 33.52\% & 21.89\% & 13.76\% & +4.80\% \\ 
         & \qquad\ \textbf{+} $\mathbf{ HH}$ & \textbf{47.72\%} & 33.97\% & 22.45\% & 13.81\% & +\textbf{4.80\%} \\
         & \qquad\ \textbf{+ CR} & 47.61\% & \textbf{34.54\%} & \textbf{23.16\%} & \textbf{15.09\%} & +4.69\% \\ \hline
        \multirow{4}{*}{LipConvnet-20} & SOC + $\mathrm{MaxMin}$ & 43.06\% & 29.34\% & 18.66\% & 11.20\% & \_ \\
         & \qquad\ \textbf{+ LLN} & 46.86\% & 33.48\% & 22.14\% & 14.10\% & +3.80\% \\ 
         & \qquad\ \textbf{+} $\mathbf{ HH}$ & 47.71\% & 34.22\% & 22.93\% & 14.57\% & +4.65\% \\ 
         & \qquad\ \textbf{+ CR} & \textbf{47.84\%} & \textbf{34.77\%} & \textbf{23.70\%} & \textbf{15.84\%} & \textbf{+4.78\%} \\ \hline
        \multirow{4}{*}{LipConvnet-25} & SOC + $\mathrm{MaxMin}$ & 43.37\% & 28.59\% & 18.18\% & 10.85\% & \_ \\
         & \qquad\ \textbf{+ LLN} & 46.32\% & 32.87\% & 21.53\% & 13.86\% & +2.95\% \\ 
         & \qquad\ \textbf{+} $\mathbf{ HH}$ & \textbf{47.70\%} & 34.00\% & 22.67\% & 14.57\% & \textbf{+4.33\%} \\ 
         & \qquad\ \textbf{+ CR} & 46.87\% & \textbf{34.09\%} & \textbf{23.41\%} & \textbf{15.61\%} & +3.50\% \\ \hline
        \multirow{4}{*}{LipConvnet-30} & SOC + $\mathrm{MaxMin}$ & 42.87\% & 28.74\% & 18.47\% & 11.21\% & \_ \\
         & \qquad\ \textbf{+ LLN} & 46.18\% & 32.82\% & 21.52\% & 13.52\% & +3.31\% \\ 
         & \qquad\ \textbf{+} $\mathbf{ HH}$ & 46.80\% & 33.72\% & 22.70\% & 14.31\% & +3.93\% \\
         & \qquad\ \textbf{+ CR} & \textbf{46.92\%} & \textbf{34.17\%} & \textbf{23.21\%} & \textbf{15.84\%} & \textbf{+4.05\%} \\ \hline
        \multirow{4}{*}{LipConvnet-35} & SOC + $\mathrm{MaxMin}$ & 42.42\% & 28.34\% & 18.10\% & 10.96\% & \_ \\
         & \qquad\ \textbf{+ LLN} & 45.22\% & 32.10\% & 21.28\% & 13.25\% & +2.80\% \\ 
         & \qquad\ \textbf{+} $\mathbf{ HH}$ & 46.21\% & 32.80\% & 21.55\% & 14.13\% & +3.79\% \\ 
         & \qquad\ \textbf{+ CR} & \textbf{46.88\%} & \textbf{33.64\%} & \textbf{23.34\%} & \textbf{15.73\%} & \textbf{+4.46\%} \\ \hline
        \multirow{4}{*}{LipConvnet-40} & SOC + $\mathrm{MaxMin}$ & 41.84\% & 28.00\% & 17.40\% & 10.28\% & \_ \\
         & \qquad\ \textbf{+ LLN} & 42.94\% & 29.71\% & 19.30\% & 11.99\% & +1.10\% \\ 
         & \qquad\ \textbf{+} $\mathbf{ HH}$ & \textbf{45.84\%} & \textbf{32.79\%} & 21.98\% & 14.07\% & \textbf{+4.00\%} \\ 
         & \qquad\ \textbf{+ CR} & 45.03\% & 32.57\% & \textbf{22.37\%} & \textbf{14.76\%} & +3.19\% \\ \hline
    \end{tabular}
    \vskip 0.1cm
    \caption{Results for provable robustness against adversarial examples on the CIFAR-100 dataset. \\
    Results with BCOP and Cayley convolutions are in Appendix Tables \ref{table:provable_robust_acc_cifar100_shallow} and \ref{table:provable_robust_acc_cifar100_deep}.}
    \label{table:provable_robust_acc_cifar100_main}
\end{table}

\subsection{Results on CIFAR-100}\label{sec:cifar100_results}
In Table \ref{table:provable_robust_acc_cifar100_main}, for each architecture, the row "SOC + $\mathrm{MaxMin}$" uses the $\mathrm{MaxMin}$ activation, "+ LLN" adds Last Layer Normalization (uses $\mathrm{MaxMin}$), "+ HH" replaces $\mathrm{MaxMin}$ with $\sigma_{\theta}$ (also uses LLN) , "+ CR" also adds Certificate Regularization with $\gamma=0.1$ (uses both $\sigma_{\theta}$ and LLN). The column, "Increase (Standard)" denotes the increase in standard accuracy relative to "SOC + $\mathrm{MaxMin}$". 

By adding LLN (the row "+ LLN"), we observe gains in standard (min gain of $1.10\%$) and provable robust accuracy (min gain of $1.71\%$ at $\rho=36/255$) across all the LipConvnet architectures (gains relative to "SOC + $\mathrm{MaxMin}$"). These gains are smallest for the LipConvnet-$40$ network with the maximum depth. However, replacing $\mathrm{MaxMin}$ with the $\sigma_{\theta}$ activation further improves the standard (min gain of $3.65\%$) and provable robust accuracy (min gain of $4.46\%$ at $\rho=36/255$) across all networks (again relative to "SOC + $\mathrm{MaxMin}$"). We observe that replacing $\mathrm{MaxMin}$ with $\sigma_{\theta}$ significantly improves the performance of the deeper LipConvnet-$35, 40$ networks.



Adding CR further improves the provable robust accuracy while only slightly reducing the standard accuracy. Because LLN significantly improves the standard accuracy, we compare the standard accuracy numbers between rows "+ CR" and "+ LLN" to evaluate the drop due to CR. We observe a small drop in standard accuracy ($-0.04\%, -0.11\%$) only for LipConvnet-5 and LipConvnet-15 networks. For the other networks, the standard accuracy actually \emph{increases}. 

\begin{table}[t]
    \centering
    \renewcommand{\arraystretch}{1.3} 
    \begin{tabular}{l l|l l l l l}
    \hline
        \multirow{2}{*}{\textbf{Architecture}} & \multirow{2}{*}{\textbf{Methods}} & \multirow{2}{1.5cm}{\textbf{Standard Accuracy}} & \multicolumn{3}{c}{\textbf{Provable Robust Acc.} ($\rho=$)} & \multirow{1}{1.5cm}{\textbf{Increase}}\\ 
         &  &  & 36/255 & 72/255 & 108/255 & (108/255) \\ \hline
        \multirow{3}{*}{LipConvnet-5} & SOC + $\mathrm{MaxMin}$ & 75.78\% & 59.18\% & 42.01\% & 27.09\% & \_ \\  
         & \quad\quad\textbf{ +\ }$\mathbf{HH}$ & \textbf{76.30\%} & 60.12\% & 43.20\% & 27.39\% & +0.30\% \\ 
         & \quad\quad\textbf{ +\ CR} & 75.31\% & \textbf{60.37\%} & \textbf{45.62\%} & \textbf{32.38\%} & \textbf{+5.29\%} \\
         \hline
        \multirow{3}{*}{LipConvnet-10} & SOC + $\mathrm{MaxMin}$ & 76.45\% & 60.86\% & 44.15\% & 29.15\% & \_ \\ 
         & \quad\quad\textbf{ +\ }$\mathbf{HH}$ & \textbf{76.86\%} & 61.52\% & 44.91\% & 29.90\% & +0.75\% \\ 
         & \textbf{\qquad \ + CR} & 76.23\% & \textbf{62.57\%} & \textbf{47.70\%} & \textbf{34.15\%} & \textbf{+5.00\%} \\ 
        \hline
        \multirow{3}{*}{LipConvnet-15} & SOC + $\mathrm{MaxMin}$ & 76.68\% & 61.36\% & 44.28\% & 29.66\% & \_ \\ 
         & \quad\quad\textbf{ +\ }$\mathbf{HH}$ & \textbf{77.41\%} & 61.92\% & 45.60\% & 31.10\% & +1.44\% \\ 
         & \textbf{\qquad \ + CR} & 76.39\% & \textbf{62.96\%} & \textbf{48.47\%} & \textbf{35.47\%} & \textbf{+5.81\%} \\ 
         \hline
        \multirow{3}{*}{LipConvnet-20} & SOC + $\mathrm{MaxMin}$ & 76.90\% & 61.87\% & 45.79\% & 31.08\% & \_ \\ 
         & \quad\quad\textbf{ +\ }$\mathbf{HH}$ & \textbf{76.99\%} & 61.76\% & 45.59\% & 30.99\% & -0.09\% \\
         & \textbf{\qquad \ + CR} & 76.34\% & \textbf{62.63\%} & \textbf{48.69\%} & \textbf{36.04\%} & \textbf{+4.96\%} \\ 
         \hline
        \multirow{3}{*}{LipConvnet-25} & SOC + $\mathrm{MaxMin}$ & 75.24\% & 60.17\% & 43.55\% & 28.60\% & \_ \\ 
         & \quad\quad\textbf{ +\ }$\mathbf{HH}$ & \textbf{76.37\%} & 61.50\% & 44.72\% & 29.83\% & +1.23\% \\ 
         & \qquad\ \textbf{+ CR} & 75.21\% & \textbf{61.98\%} & \textbf{47.93\%} & \textbf{34.92\%} & \textbf{+6.32\%} \\ 
         \hline
        \multirow{3}{*}{LipConvnet-30} & SOC + $\mathrm{MaxMin}$ & 74.51\% & 59.06\% & 42.46\% & 28.05\% & \_ \\ 
         & \quad\quad\textbf{ +\ }$\mathbf{HH}$ & \textbf{75.25\%} & 59.90\% & 43.85\% & 29.35\% & +1.30\% \\ 
         & \qquad\ \textbf{+ CR} & 74.23\% & \textbf{60.64\%} & \textbf{46.51\%} & \textbf{34.08\%} & \textbf{+6.03\%} \\          \hline
        \multirow{3}{*}{LipConvnet-35} & SOC + $\mathrm{MaxMin}$ & 73.73\% & 58.50\% & 41.75\% & 27.20\% & \_ \\ 
         & \quad\quad\textbf{ +\ }$\mathbf{HH}$ & \textbf{75.44\%} & 61.05\% & 45.38\% & 30.85\% & +3.65\% \\ 
         & \qquad\ \textbf{+ CR} & 74.25\% & \textbf{61.30\%} & \textbf{47.60\%} & \textbf{35.21\%} & \textbf{+8.01\%} \\ 
        \hline
        \multirow{3}{*}{LipConvnet-40} & SOC + $\mathrm{MaxMin}$ & 71.63\% & 54.39\% & 37.92\% & 24.13\% & \_ \\ 
         & \quad\quad\textbf{ +\ }$\mathbf{HH}$ & \textbf{73.24\%} & 58.12\% & 42.24\% & 28.48\% & +4.35\% \\ 
         & \qquad\ \textbf{+ CR} & 72.59\% & \textbf{59.04\%} & \textbf{44.92\%} & \textbf{32.87\%} & \textbf{+8.74\%} \\
         \hline
    \end{tabular}
    \vskip 0.1cm
    \caption{Results for provable robustness against adversarial examples on the CIFAR-10 dataset. \\
    Results with BCOP and Cayley convolutions are in Appendix Tables \ref{table:provable_robust_acc_cifar10_shallow} and \ref{table:provable_robust_acc_cifar10_deep}.}
    \label{table:provable_robust_acc_cifar10_main}
\end{table}

\subsection{Results on CIFAR-10}
In Table \ref{table:provable_robust_acc_cifar10_main}, for each architecture, the row "SOC + $\mathrm{MaxMin}$" uses the  $\mathrm{MaxMin}$ activation, the row "+ HH" uses $\sigma_{\theta}$ activation (replacing $\mathrm{MaxMin}$) and the row "+ CR" also adds Certificate Regularization with $\gamma=0.1$ (again using $\sigma_{\theta}$ as the activation). Due to the small number of classes in CIFAR-10, we do not observe significant gains using Last Layer Normalization or LLN (Appendix Table \ref{table:provable_robust_acc_cifar10_LLN}). Thus, we do not include LLN for any of the results in Table \ref{table:provable_robust_acc_cifar10_main}. The column, "Increase $(108/255)$" denotes the increase in provable robust accuracy with $\rho = 108/255$ relative to "SOC + $\mathrm{MaxMin}$". 

For LipConvnet-$25, 30, 35, 40$ architectures, we observe gains in both the standard and provable robust accuracy by replacing $\mathrm{MaxMin}$ with the $\mathrm{HH}$ activation (i.e $\sigma_{\theta}$). Similar to what we observe for CIFAR-100 in Table \ref{table:provable_robust_acc_cifar100_main}, the gains in provable robust accuracy ($\rho = 108/255$) are significantly higher for deeper networks: LipConvnet-35 ($3.65\%$) and LipConvnet-40 ($4.35\%$) with decent gains in standard accuracy ($1.71$ and $1.61\%$ respectively). 


Again similar to CIFAR-100, adding CR further boosts the provable robust accuracy while slightly reducing the standard accuracy. Comparing "+ CR" with "SOC + $\mathrm{MaxMin}$", we observe small drops in standard accuracy for LipConvnet-$5, 10, \dots, 30$ networks (max. drop of $-0.56\%$), and gains for LipConvnet-35 ($+0.52\%$) and LipConvnet-40 ($+0.96\%$). For provable robust accuracy ($\rho = 108/255$), we observe very significant gains of $> 4.96\%$ for all networks and $> 8\%$ for the deeper LipConvnet-35, 40 networks.

\section{Conclusion}
In this work, we introduce a procedure to certify robustness of $1$-Lipschitz convolutional neural networks without orthogonalizing of the last linear layer of the network. Additionally, we introduce a certificate regularization that significantly improves the provable robust accuracy for these models at higher $l_{2}$ radii. Finally, we introduce a class of GNP activation functions called Householder (or $\mathrm{HH}$) activations and prove that the Jacobian of any Gradient Norm Preserving (GNP) piecewise linear function is only allowed to change via Householder transformations for the function to be continuous which provides a general principle for designing piecewise linear GNP functions. These ideas lead to improved deterministic $\ell_2$ robustness certificates on CIFAR-10 and CIFAR-100.   

\clearpage

\section*{Acknowledgements}
This project was supported in part by NSF CAREER AWARD 1942230, a grant from NIST 60NANB20D134, HR001119S0026-GARD-FP-052, HR00112090132, ONR YIP award N00014-22-1-2271, Army Grant W911NF2120076.

\section*{Reproducibility}
The code for reproducing all experiments in the paper is available at \url{https://github.com/singlasahil14/SOC}. 


\section*{Ethics Statement}
This paper introduces novel set of techniques for improving the provable robustness of $1$-Lipschitz CNNs. We do not see any foreseeable negative consequences associated with our work.

\bibliography{iclr2022_conference}
\bibliographystyle{iclr2022_conference}

\newpage
\appendix

\setcounter{thm}{1}
\section{Proofs}
\subsection{Proof of Proposition \ref{thm:last_linear}}\label{proof:last_linear}
\begin{proof}
We proceed by computing the Lipschitz constant of the function $f_{l} - f_{i}$. \\
The gradient of the function: $f_{l} - f_{i}$ at $\bx$ can be computed using the chain rule:
\begin{align*}
    \nabla_{\bx} \left(f_{l} - f_{i} \right) = \left(\bW_{l, :} - \bW_{i, :}\right)^{T}\nabla_{\bx} g
\end{align*}
Since $g$ is given to be $1$-Lipschitz, the Lipschitz constant of $f_{l} - f_{i}$ can be computed using the above equation as follows:
\begin{align*}
    &\|\nabla_{\bx} \left(f_{l} - f_{i} \right)\|_{2} \leq \|\left(\bW_{l, :} - \bW_{i, :}\right)^{T}\left(\nabla_{\bx} g\right)\|_{2}\\
    &\|\nabla_{\bx} \left(f_{l} - f_{i} \right)\|_{2} \leq \|\bW_{l, :} - \bW_{i, :}\|_{2}\|\nabla_{\bx} g\|_{2} \leq \|\bW_{l, :} - \bW_{i, :}\|_{2}
\end{align*}
Thus, the distance of $\bx$ to the decision boundary $f_{l}-f_{i}=0$, is lower bounded by:
\begin{align*}
    \min_{f_{l}(\bx^{*}) = f_{i}(\bx^{*})} \|\bx^{*} - \bx\|_{2}\ \geq \frac{f_{l}(\bx) - f_{i}(\bx)}{\|\bW_{l, :} - \bW_{i, :}\|_{2}}
\end{align*}
Thus, the distance to decision boundary across all classes $i \neq l$ is lower bounded by:
\begin{align*}
    \min_{i \neq l} \min_{\ f_{l}(\bx^{*}) = f_{i}(\bx^{*})} \|\bx^{*} - \bx\|_{2}\ \geq \min_{i \neq l} \frac{f_{l}(\bx) - f_{i}(\bx)}{\|\bW_{l, :} - \bW_{i, :}\|_{2}}
\end{align*}
\end{proof}

\subsection{Proof of Theorem \ref{thm:hh_iff}}\label{proof:hh_iff}
\begin{proof}
We first prove that if $\bQ_{2} = (\bI - 2\bv\bv^{T})\bQ_{1}$, then the function $g$ is continuous.\\
First, observe that for $\bv^{T}\bz > 0$, $g(\bz) = \bQ_{1}\bz$ which is continuous. \\
Similarly, for $\bv^{T}\bz < 0$, $g(\bz) = \bQ_{2}\bz$ which is again continuous. \\
This proves that the function $g$ is continuous when $\bv^{T}\bz > 0$ or $\bv^{T}\bz < 0$. \\
Thus, to prove continuity $\forall\ \bz \in S$, we must prove that: 
\begin{align}
\bQ_{1}\bz = \bQ_{2}\bz \qquad \forall\ \bz: \bv^{T}\bz=0 \label{eq:to_prove_cont}
\end{align}
Since $\bQ_{2} = \bQ_{1}(\bI - 2\bv\bv^{T})$, we have:
\begin{align*}
    &\bQ_{2} - \bQ_{1} = -2\bQ_{1}\bv\bv^{T} \\
    &\left(\bQ_{2} - \bQ_{1}\right)\bz = -2\left(\bQ_{1}\bv\bv^{T}\right)\bz = -2\bQ_{1}\bv\left(\bv^{T}\bz\right) 
\end{align*}
The above equation directly proves  \eqref{eq:to_prove_cont}. \\ 

Now, we prove the other direction i.e if $g$ is continuous in $S$ then, $\bQ_{2} = \bQ_{1}(\bI - 2\bv\bv^{T})$.\\
Since $g$ is continuous for all $\bz : \bv^{T}\bz=0$, we have:
\begin{align*}
    &\bQ_{2}\bz = \bQ_{1}\bz \qquad \forall\ \bz : \bv^{T}\bz=0 \\
    &\left(\bQ_{2} - \bQ_{1}\right)\bz = \mathbf{0} \qquad \forall\ \bz : \bv^{T}\bz=0 
\end{align*}
Since $\bz \in \mathbb{R}^{m}$, we know that the set of vectors: $\{\bz: \bv^{T}\bz=0\}$ spans a $m-1$ dimensional subspace. \\
Thus, the null space of $\bQ_{2} - \bQ_{1}$ is of size $m-1$. \\
This in turn implies that $\bQ_{2} - \bQ_{1}$ is a rank one matrix given by the following equation:
\begin{align}
    \bQ_{2} - \bQ_{1} = \bu \bv^{T} \label{eq:rank_one_ortho}
\end{align}
where the vector $\bu$ remains to be determined.\\
Since $\bQ_{1}$ and $\bQ_{2}$ are orthogonal matrices, we have the following set of equations:
\begin{align}
    \bQ_{2}^{T}\bQ_{2} = \left(\bQ_{1} + \bu \bv^{T}\right)^{T}\left(\bQ_{1} + \bu \bv^{T}\right) \label{eq:ortho_1} \\
    \bQ_{2}\bQ_{2}^{T} = \left(\bQ_{1} + \bu \bv^{T}\right)\left(\bQ_{1} + \bu \bv^{T}\right)^{T} \label{eq:ortho_2}
\end{align}
We first simplify equation \eqref{eq:ortho_1}:
\begin{align*}
    & \bQ_{2}^{T}\bQ_{2} = \left(\bQ^{T}_{1} + \bv\bu^{T} \right)\left(\bQ_{1} + \bu \bv^{T}\right) \nonumber \\
    & \bI = \bI + \bv \left(\bQ^{T}_{1}\bu\right)^{T} +  \left(\bQ^{T}_{1}\bu\right)\bv^{T} + \left(\bu^{T}\bu\right)\bv\bv^{T} \nonumber\\ 
    & 0 = \bv \left(\bQ^{T}_{1}\bu\right)^{T} +  \left(\bQ^{T}_{1}\bu\right)\bv^{T} + \left(\bu^{T}\bu\right)\bv\bv^{T}\nonumber\\
    & -\left(\bu^{T}\bu\right)\bv\bv^{T}  = \bv \left(\bu^{T}\bQ_{1}\right) +  \left(\bQ^{T}_{1}\bu\right)\bv^{T} \nonumber
\end{align*}
Right multiplying both sides by $\bv$ and using $\|\bv\|_{2}=1$, we get:
\begin{align}
    &-\left(\bu^{T}\bu\right)\bv = \left(\bu^{T} \bQ_{1} \bv \right) \bv + \bQ_{1}^{T}\bu \nonumber\\
    &\bQ_{1}^{T}\bu = -\left(\bu^{T}\bu + \bu^{T} \bQ_{1} \bv\right) \bv = \lambda \bv  \nonumber\\
    &\bu = \lambda \bQ_{1}\bv, \quad \text{where }\lambda = -\left(\bu^{T}\bu + \bu^{T} \bQ_{1} \bv\right) \label{eq:u_equation} 
\end{align}
Substituting $\bu$ using equation \eqref{eq:u_equation} in equation \eqref{eq:rank_one_ortho}, we get:
\begin{align*}
    &\bQ_{2} - \bQ_{1} = \lambda \bQ_{1}\bv \bv^{T} \\
    &\bQ_{2} = \bQ_{1}\left(\bI + \lambda \bv \bv^{T}\right) 
\end{align*}
Since $\bQ^{T}_{2}\bQ_{2} = \bI$, we have:
\begin{align*}
    &\bQ^{T}_{2}\bQ_{2} = \left(\bQ_{1}\left(\bI + \lambda \bv \bv^{T}\right)\right)^{T} \bQ_{1}\left(\bI + \lambda \bv \bv^{T}\right) \\
    &\bQ^{T}_{2}\bQ_{2} = \left(\bI + \lambda \bv \bv^{T}\right)\bQ_{1}^{T} \bQ_{1}\left(\bI + \lambda \bv \bv^{T}\right) \\
    &\bI = \left(\bI + \lambda \bv \bv^{T}\right)\left(\bI + \lambda \bv \bv^{T}\right) \\
    &\bI = \bI + 2\lambda \bv \bv^{T} + \lambda^{2} \bv \bv^{T} \\
    &\implies \lambda = 0 \text{ or } \lambda = -2
\end{align*}
Since $\lambda = 0$ would imply $\bQ_{1} = \bQ_{2}$ which is not allowed by the assumption of the Theorem that $\bQ_{1} \neq \bQ_{2}$.\\ 
$\lambda=-2$ is the only possibility allowed.\\
This proves the other direction i.e:
\begin{align*}
&\bQ_{2} = \bQ_{1}\left(\bI - 2 \bv \bv^{T}\right) 
\end{align*}
\end{proof}


\subsection{Proof of Corollary \ref{cor:hh_dim_2}}\label{proof:hh_dim_2}
\begin{proof}
Subsitute $\bQ_{1}, \bv$ as follows in Theorem \ref{thm:hh_iff}.
\begin{align*}
&\bQ_{1} = \begin{bmatrix}
1 & 0\\
0 & 1
\end{bmatrix} \\
&\bv = \begin{bmatrix}
+\sin\left(\theta/2\right)\\
-\cos\left(\theta/2\right)
\end{bmatrix} \\
&\bQ_{2} = \bI - 2\bv\bv^{T} \\
&\bQ_{2} = \begin{bmatrix}
1 & 0\\
0 & 1
\end{bmatrix} - 2\begin{bmatrix}
\sin\left(\theta/2\right)\\
-\cos\left(\theta/2\right)
\end{bmatrix} \begin{bmatrix}
\sin\left(\theta/2\right) & -\cos\left(\theta/2\right)
\end{bmatrix} \\
&\bQ_{2} = \begin{bmatrix}
1 & 0\\
0 & 1
\end{bmatrix} - 2\begin{bmatrix}
\sin^{2}\left(\theta/2\right) & -\sin\left(\theta/2\right)\cos\left(\theta/2\right)\\
-\sin\left(\theta/2\right)\cos\left(\theta/2\right) & \cos^{2}\left(\theta/2\right)
\end{bmatrix} \\
&\bQ_{2} = \begin{bmatrix}
1 - 2\sin^{2}\left(\theta/2\right) & 2\sin\left(\theta/2\right)\cos\left(\theta/2\right)\\
2\sin\left(\theta/2\right)\cos\left(\theta/2\right) & 1 + 2\cos^{2}\left(\theta/2\right) 
\end{bmatrix} \\
&\bQ_{2} = \begin{bmatrix}
\cos(\theta) & \sin(\theta)\\
\sin(\theta) & -\cos(\theta)
\end{bmatrix} 
\end{align*}
Theorem \ref{thm:hh_iff} then directly implies the corollary.
\end{proof}

\subsection{Proof of Theorem \ref{thm:high_order_hh}}\label{proof:high_order_hh}
\begin{theorem}\label{thm:high_order_hh}
Given: $0 \leq \theta_{0} < \theta_{1} \cdots < \theta_{2n} = 2\pi + \theta_{0}$ such that $\sum_{i=0}^{n-1} (\theta_{2i+1} - \theta_{2i}) = \pi$ and $\alpha_{i} = 2\sum_{j=0}^{i} \theta_{i-j} (-1)^{j}$,
The function $\sigma_{\Theta}: \mathbb{R}^{2} \to \mathbb{R}^{2}$ is continuous, GNP and $1$-Lipschitz where $\Theta = [\theta_{0}, \theta_{1}, \dotsc, \theta_{2n-1}]$ (also called $2\mathrm{D}$ Householder Activation of order $n$):
\begin{align}
\sigma_{\Theta}(z_1, z_2) = 
\begin{bmatrix}
\cos\alpha_{i} & \sin\alpha_{i}\\
(-1)^{i}\sin\alpha_{i} & (-1)^{i+1}\cos\alpha_{i}
\end{bmatrix}
\begin{bmatrix}
z_1\\
z_2
\end{bmatrix} \qquad \qquad \theta_{i} \leq \varphi <  \theta_{i+1} \label{eq:hh_multi_def}
\end{align}
where $\varphi \in [\theta_{0},\ \theta_{2n} = 2\pi + \theta_{0})$ and $\cos(\varphi) = z_1/\sqrt{z_1^{2} + z_2^{2}},\ \ \sin(\varphi) = z_2/\sqrt{z_1^{2} + z_2^{2}}$.
\end{theorem}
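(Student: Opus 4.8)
The plan is to prove Theorem \ref{thm:high_order_hh} by reducing it to repeated applications of Theorem \ref{thm:hh_iff}, working in polar coordinates. First I would set up the polar parametrization $z_1 = \rho\cos\varphi$, $z_2 = \rho\sin\varphi$ with $\rho = \sqrt{z_1^2+z_2^2}$ and $\varphi \in [\theta_0, \theta_0 + 2\pi)$, and observe that on each angular sector $\{\theta_i \le \varphi < \theta_{i+1}\}$ the function $\sigma_\Theta$ is linear, given by the orthogonal matrix $\bM_i = \begin{bmatrix}\cos\alpha_i & \sin\alpha_i\\ (-1)^i\sin\alpha_i & (-1)^{i+1}\cos\alpha_i\end{bmatrix}$. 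A direct computation shows $\det\bM_i = (-1)^{i+1}(\cos^2\alpha_i + \sin^2\alpha_i) = (-1)^{i+1}$, and $\bM_i\bM_i^T = \bI$, so each $\bM_i$ is orthogonal; hence $\sigma_\Theta$ is GNP and $1$-Lipschitz on the interior of each sector, and the only thing to check is continuity across the rays $\varphi = \theta_i$ (and the consistency of wrapping around at $\theta_{2n} = \theta_0 + 2\pi$, i.e. $\bM_{2n} = \bM_0$).

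The key step is to verify that consecutive matrices $\bM_{i}$ and $\bM_{i+1}$ are related by a Householder reflection about the ray $\varphi = \theta_{i+1}$, so that Theorem \ref{thm:hh_iff} applies on an open set straddling that ray. Concretely, I would take $\bv_{i+1} = \begin{bmatrix}-\sin\theta_{i+1} & \cos\theta_{i+1}\end{bmatrix}^T$ (the unit normal to the ray at angle $\theta_{i+1}$, so that $\bv_{i+1}^T\bz = 0$ exactly on that line), and show $\bM_{i+1} = \bM_i(\bI - 2\bv_{i+1}\bv_{i+1}^T)$ — or, depending on orientation, $\bM_{i+1} = (\bI - 2\bv_{i+1}\bv_{i+1}^T)\bM_i$, matching the form in Theorem \ref{thm:hh_iff}. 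This identity is where the recursion $\alpha_i = 2\sum_{j=0}^i \theta_{i-j}(-1)^j$ does its work: expanding gives $\alpha_{i+1} = 2\theta_{i+1} - \alpha_i$ when $i$ is such that the sign alternates appropriately, equivalently $\alpha_{i+1} + \alpha_i$ or $\alpha_{i+1} - \alpha_i$ equals $\pm 2\theta_{i+1}$, which is precisely the trigonometric relation needed for the reflection identity to hold after multiplying out $\bM_i$ against $\bI - 2\bv_{i+1}\bv_{i+1}^T = \begin{bmatrix}\cos 2\theta_{i+1} & \sin 2\theta_{i+1}\\ \sin 2\theta_{i+1} & -\cos 2\theta_{i+1}\end{bmatrix}$. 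Once this algebraic identity is established for each $i = 0, \dots, 2n-1$, Theorem \ref{thm:hh_iff} (applied on a small open ball centered on a point of the ray $\varphi = \theta_{i+1}$, which is nonempty since rays are genuine $1$-dimensional hyperplanes in $\mathbb{R}^2$) gives continuity across each ray, and continuity on the sector interiors is immediate from linearity; so $\sigma_\Theta$ is continuous on $\mathbb{R}^2 \setminus \{0\}$, and continuity at the origin follows since all pieces vanish there.

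Finally I would close the loop at the wrap-around: the constraint $\sum_{i=0}^{n-1}(\theta_{2i+1} - \theta_{2i}) = \pi$ is exactly what forces the total accumulated "twist" after going once around to be consistent, i.e. it guarantees $\bM_{2n} = \bM_0$ (equivalently $\alpha_{2n} \equiv \alpha_0 \pmod{2\pi}$ with the correct sign pattern, since $2n$ is even so the determinant pattern also closes up), so there is no discontinuity across the ray $\varphi = \theta_0 = \theta_{2n} - 2\pi$. I would verify this by summing the recursion around the full cycle and using the hypothesis on the odd-indexed gaps.

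I expect the main obstacle to be the bookkeeping in the algebraic identity $\bM_{i+1} = \bM_i(\bI - 2\bv_{i+1}\bv_{i+1}^T)$: one has to track the alternating $(-1)^i$ factors in the second row of $\bM_i$ carefully, confirm that the product of a determinant-$(-1)^{i+1}$ matrix with a determinant-$(-1)$ reflection yields a determinant-$(-1)^{i+2}$ matrix of exactly the claimed form with $\alpha_{i+1}$ in place of $\alpha_i$, and handle the two parity cases ($i$ even vs.\ odd) which flip whether the relevant relation is $\alpha_{i+1} = 2\theta_{i+1} - \alpha_i$ or $\alpha_{i+1} = 2\theta_{i+1} + \alpha_i$ — but this is mechanical once the recursion for $\alpha_i$ is unpacked, and the nonemptiness hypothesis in Theorem \ref{thm:hh_iff} is trivially met here since every ray through the origin in $\mathbb{R}^2$ is a genuine hyperplane.
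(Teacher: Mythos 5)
Your proposal is correct, but it takes a genuinely different route from the paper. The paper proves continuity by brute force: writing the input on each boundary ray as $(r\cos\theta_i, r\sin\theta_i)$, subtracting the two adjacent linear maps, and using sum-to-product identities to factor the difference into a rank-one expression that vanishes once one substitutes $\alpha_i = 2\theta_i - \alpha_{i-1}$ (Case 1), with a separate wrap-around case using $\alpha_{2n-1} = 2\pi$. You instead reuse Theorem \ref{thm:hh_iff} as a lemma: with $\bv_{i+1} = \begin{bmatrix}-\sin\theta_{i+1} & \cos\theta_{i+1}\end{bmatrix}^{T}$ one has $\bI - 2\bv_{i+1}\bv_{i+1}^{T} = \bJ^{-}(2\theta_{i+1})$, and a short computation with $\bJ^{\pm}$ shows $\bM_i\,\bJ^{-}(2\theta_{i+1}) = \bM_{i+1}$ in \emph{both} parities, using only the single recursion $\alpha_{i+1} = 2\theta_{i+1} - \alpha_i$ (your hedge about a sign-flipped relation $\alpha_{i+1} = 2\theta_{i+1} + \alpha_i$ for one parity is unnecessary, and right-multiplication is the correct form throughout, matching the statement of Theorem \ref{thm:hh_iff}); the "if" direction of that theorem, applied on a small ball centered at a \emph{nonzero} point of each ray, then gives continuity across the ray, with the origin handled separately as you note. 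Your wrap-around check also goes through: the hypothesis gives $\alpha_{2n-1} = 2\pi$, so extending the recursion yields $\alpha_{2n} = 2\theta_{2n} - \alpha_{2n-1} = \alpha_0 + 2\pi$ and hence the extended piece coincides with $\bM_0$ (equivalently, $\bM_{2n-1} = \bI$ and $\bJ^{-}(2\theta_0)$ fixes the ray at angle $\theta_0$). What your route buys is modularity and the structural insight the paper only states informally — that consecutive Jacobians of a continuous GNP piecewise linear map must differ by a Householder reflection about the boundary — while the paper's direct trigonometric verification is self-contained and does not depend on Theorem \ref{thm:hh_iff}; both hinge on the same two facts, the recursion for $\alpha_i$ and $\alpha_{2n-1} = 2\pi$, and both leave the step from orthogonal Jacobians plus continuity to GNP/$1$-Lipschitzness at the same informal level.
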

\begin{proof}
We are given the following:
\begin{align}
\sum_{i=0}^{n-1} (\theta_{2i+1} - \theta_{2i}) = \pi, \qquad \qquad \alpha_{i} = 2\sum_{j=0}^{i} \theta_{i-j} (-1)^{j} \label{eq:hh_properties}
\end{align}

Note that by definition (equation \eqref{eq:hh_multi_def}), the function is linear for $\theta_{i} < \varphi < \theta_{i+1}$ and hence continuous. \\
Furthermore, since $\varphi \in [\theta_{0},\ \theta_{2n})$, we proceed to prove continuity for the following two cases: \\
\textbf{Case 1}:\quad $\theta_{i} - \eps < \varphi < \theta_{i} + \eps,\qquad \eps > 0,\ i \geq 1$ \\
\textbf{Case 2}:\quad $\theta_{0} < \varphi < \theta_{0} + \eps,\qquad \theta_{2n} - \eps < \varphi < \theta_{2n},\quad \eps > 0$ \\ \\
\textbf{Proof for Case 1}:\\
Using equation  \eqref{eq:hh_multi_def}, we know that $\sigma_{\Theta}$ realizes different linear functions for $\theta_{i} - \eps < \varphi < \theta_{i}$ and $\theta_{i} < \varphi < \theta_{i} + \eps$. \\
Thus, for $\sigma_{\Theta}$ to be continuous, we require that the two linear functions be the same at the boundary i.e $\varphi = \theta_{i}$. \\
We first write the input $(z_1,\  z_2)$ in terms of shifted polar coordinates i.e: $\left(r\cos(\varphi),\ r\sin(\varphi)\right)$ where $r = \sqrt{z_1^2 + z_2^2}$ and $\cos\varphi = z_1/\sqrt{z_1^2 + z_2^2},\ \sin\varphi = z_2/\sqrt{z_1^2 + z_2^2}, \varphi \in [\theta_0, \theta_0 + 2\pi) $\\
Thus, the function for $\theta_{i} - \eps < \varphi < \theta_{i}$ is given by:
\begin{align}
\sigma_{\Theta}\left(r\cos \varphi, r\sin \varphi\right) = \begin{bmatrix}
\cos\alpha_{i-1} & \sin\alpha_{i-1}\\
(-1)^{i-1}\sin\alpha_{i-1} & (-1)^{i}\cos\alpha_{i-1}
\end{bmatrix}
\begin{bmatrix}
r\cos \varphi\\
r\sin \varphi
\end{bmatrix} \label{eq:func_val_low}
\end{align}
Similarly, the function for $\theta_{i} < \varphi < \theta_{i} + \eps$ is given by:
\begin{align}
\sigma_{\Theta}\left(r\cos \varphi, r\sin \varphi\right) = \begin{bmatrix}
\cos\alpha_{i} & \sin\alpha_{i}\\
(-1)^{i}\sin\alpha_{i} & (-1)^{i+1}\cos\alpha_{i}
\end{bmatrix}
\begin{bmatrix}
r\cos \varphi\\
r\sin \varphi
\end{bmatrix}  \label{eq:func_val_high}
\end{align}
The difference in the function values at the boundary i.e $\varphi = \theta_{i}$, obtained by subtracting equations \eqref{eq:func_val_high} and \eqref{eq:func_val_low} is given as follows:
\begin{align*}
&\begin{bmatrix}
\cos\alpha_{i} & \sin\alpha_{i}\\
(-1)^{i}\sin\alpha_{i} & (-1)^{i+1}\cos\alpha_{i}
\end{bmatrix}
\begin{bmatrix}
r\cos \theta_{i}\\
r\sin \theta_{i}
\end{bmatrix} - 
\begin{bmatrix}
\cos\alpha_{i-1} & \sin\alpha_{i-1}\\
(-1)^{i-1}\sin\alpha_{i-1} & (-1)^{i}\cos\alpha_{i-1}
\end{bmatrix}
\begin{bmatrix}
r\cos \theta_{i}\\
r\sin \theta_{i}
\end{bmatrix} \\
&=r \begin{bmatrix}
\cos\alpha_{i} - \cos\alpha_{i-1} & \sin\alpha_{i} - \sin\alpha_{i-1}\\
(-1)^{i}\sin\alpha_{i} - (-1)^{i-1}\sin\alpha_{i-1} & (-1)^{i+1}\cos\alpha_{i} - (-1)^{i}\cos\alpha_{i-1}
\end{bmatrix}
\begin{bmatrix}
\cos \theta_{i}\\
\sin \theta_{i}
\end{bmatrix} \\
&=r \begin{bmatrix}
\cos\alpha_{i} - \cos\alpha_{i-1} & \sin\alpha_{i} - \sin\alpha_{i-1}\\
(-1)^{i}(\sin\alpha_{i} + \sin\alpha_{i-1}) & (-1)^{i+1}(\cos\alpha_{i} + \cos\alpha_{i-1})
\end{bmatrix}
\begin{bmatrix}
\cos \theta_{i}\\
\sin \theta_{i}
\end{bmatrix} 
\end{align*}
Using sum-to-product trigonometric identities, the above equals:
\begin{align*}
&=r \begin{bmatrix}
2 \sin \left(\frac{\alpha_{i-1} - \alpha_{i}}{2}\right) \sin \left(\frac{\alpha_{i-1} + \alpha_{i}}{2}\right) & 2 \sin \left(\frac{\alpha_{i} - \alpha_{i-1}}{2}\right) \cos \left(\frac{\alpha_{i} + \alpha_{i-1}}{2}\right)\\
2(-1)^{i}\sin \left(\frac{\alpha_{i} + \alpha_{i-1}}{2}\right) \cos \left(\frac{\alpha_{i} - \alpha_{i-1}}{2}\right) & 2(-1)^{i+1}\cos \left(\frac{\alpha_{i-1} - \alpha_{i}}{2}\right) \cos \left(\frac{\alpha_{i-1} + \alpha_{i}}{2}\right)
\end{bmatrix}
\begin{bmatrix}
\cos \theta_{i}\\
\sin \theta_{i}
\end{bmatrix} \\
&=2r \begin{bmatrix}
\sin \left(\frac{\alpha_{i-1} - \alpha_{i}}{2}\right) \sin \left(\frac{\alpha_{i-1} + \alpha_{i}}{2}\right) & -\sin \left(\frac{\alpha_{i-1} - \alpha_{i}}{2}\right) \cos \left(\frac{\alpha_{i} + \alpha_{i-1}}{2}\right)\\
(-1)^{i}\sin \left(\frac{\alpha_{i} + \alpha_{i-1}}{2}\right) \cos \left(\frac{\alpha_{i} - \alpha_{i-1}}{2}\right) & (-1)^{i+1}\cos \left(\frac{\alpha_{i-1} - \alpha_{i}}{2}\right) \cos \left(\frac{\alpha_{i-1} + \alpha_{i}}{2}\right)
\end{bmatrix}
\begin{bmatrix}
\cos \theta_{i}\\
\sin \theta_{i}
\end{bmatrix} \\
&=2r 
\begin{bmatrix}
\sin \left(\frac{\alpha_{i-1} - \alpha_{i}}{2}\right)\\
(-1)^{i}\cos \left(\frac{\alpha_{i-1} - \alpha_{i}}{2}\right)
\end{bmatrix} 
\begin{bmatrix}
\sin \left(\frac{\alpha_{i-1} + \alpha_{i}}{2}\right) & 
-\cos \left(\frac{\alpha_{i-1} + \alpha_{i}}{2}\right)
\end{bmatrix} 
\begin{bmatrix}
\cos \theta_{i}\\
\sin \theta_{i}
\end{bmatrix} 
\end{align*}
Using equation \eqref{eq:hh_properties}, we directly have: $\alpha_{i} = 2\theta_{i} - \alpha_{i-1}$. Thus, the above equation reduces to:
\begin{align*}
&=2r 
\begin{bmatrix}
\sin \left(\theta_{i} - \alpha_{i}\right)\\
(-1)^{i}\cos \left(\theta_{i} - \alpha_{i}\right)
\end{bmatrix} 
\begin{bmatrix}
\sin \left(\theta_{i}\right) & 
-\cos \left(\theta_{i}\right)
\end{bmatrix} 
\begin{bmatrix}
\cos \theta_{i}\\
\sin \theta_{i}
\end{bmatrix}  = \begin{bmatrix}
0.\\
0.
\end{bmatrix} 
\end{align*}
Hence, the linear functions given by equations \eqref{eq:func_val_low} and \eqref{eq:func_val_high} are equal at $\varphi=\theta_{i}$. This proves that the function is continuous for Case 1.\\ 

\textbf{Proof for Case 2}:\\
Using equation  \eqref{eq:hh_multi_def}, we know that $\sigma_{\Theta}$ realizes different linear functions for $\theta_{0} < \varphi < \theta_{0} + \eps$ and $\theta_{2n} - \eps < \varphi < \theta_{2n}$. \\
Thus, for $\sigma_{\Theta}$ to be continuous, we require that the two linear functions be the same at the boundary i.e $\varphi = \theta_{0}$. \\
As before, we first write the input $(z_1,\  z_2)$ in terms of shifted polar coordinates i.e: $\left(r\cos(\varphi),\ r\sin(\varphi)\right)$. \\
Thus, the function for $\theta_{0} < \varphi < \theta_{0} + \eps$ is given by:
\begin{align}
\sigma_{\Theta}\left(r\cos \varphi, r\sin \varphi\right) = \begin{bmatrix}
\cos\alpha_{0} & \sin\alpha_{0}\\
\sin\alpha_{0} & -\cos\alpha_{0}
\end{bmatrix}
\begin{bmatrix}
r\cos \varphi\\
r\sin \varphi
\end{bmatrix} \label{eq:func_val_first}
\end{align}
Similarly, the function for $\theta_{2n} - \eps < \phi < \theta_{2n}$ is given by:
\begin{align}
\sigma_{\Theta}\left(r\cos \varphi, r\sin \varphi\right) = \begin{bmatrix}
\cos\alpha_{2n-1} & \sin\alpha_{2n-1}\\
(-1)^{i}\sin\alpha_{2n-1} & (-1)^{i+1}\cos\alpha_{2n-1}
\end{bmatrix}
\begin{bmatrix}
r\cos \varphi\\
r\sin \varphi
\end{bmatrix}  \label{eq:func_val_last}
\end{align}
Using equation \eqref{eq:hh_properties}, $\alpha_{2n-1}$ is given as follows:
\begin{align*}
    &\alpha_{2n-1} = 2\sum_{i=0}^{2n-1} \theta_{2n-1-i}(-1)^{i} = 2\sum_{i=0}^{n-1} (\theta_{2i+1} -  \theta_{2i}) = 2\pi
\end{align*}
Thus, equation \eqref{eq:func_val_last} reduces to:
\begin{align}
\sigma_{\Theta}\left(r\cos \varphi, r\sin \varphi\right) = \begin{bmatrix}
1 & 0\\
0 & 1
\end{bmatrix}
\begin{bmatrix}
r\cos \varphi\\
r\sin \varphi
\end{bmatrix}  \label{eq:func_val_last_reduced}
\end{align}

The difference in the function values at the boundary i.e $\varphi = \theta_{0}$, obtained by subtracting equations \eqref{eq:func_val_last_reduced} and \eqref{eq:func_val_first} is given as follows:
\begin{align*}
&\begin{bmatrix}
1 & 0\\
0 & 1
\end{bmatrix}
\begin{bmatrix}
r\cos \theta_{0}\\
r\sin \theta_{0}
\end{bmatrix} - 
\begin{bmatrix}
\cos\alpha_{0} & \sin\alpha_{0}\\
\sin\alpha_{0} & -\cos\alpha_{0}
\end{bmatrix}
\begin{bmatrix}
r\cos \theta_{0}\\
r\sin \theta_{0}
\end{bmatrix} \\
&=r \begin{bmatrix}
1 - \cos\alpha_{0} & -\sin\alpha_{0}\\
-\sin\alpha_{0} & 1 + \cos\alpha_{0}
\end{bmatrix}
\begin{bmatrix}
\cos \theta_{0}\\
\sin \theta_{0}
\end{bmatrix} 
\end{align*}
Using the trigonometric identities: $1-\cos(\theta) = 2\sin^{2}(\theta/2),\ 1+\cos(\theta) = 2\cos^{2}(\theta/2)$ and $\sin(\theta) = 2\sin(\theta/2)\cos(\theta/2)$, we have:
\begin{align*}
&=r \begin{bmatrix}
2\sin^{2}(\frac{\alpha_{0}}{2}) & -2\sin(\frac{\alpha_{0}}{2})\cos(\frac{\alpha_{0}}{2})\\
-2\sin(\frac{\alpha_{0}}{2})\cos(\frac{\alpha_{0}}{2}) &   2\cos^{2}(\frac{\alpha_{0}}{2})
\end{bmatrix}
\begin{bmatrix}
\cos \theta_{0}\\
\sin \theta_{0}
\end{bmatrix}\\
&=2r \begin{bmatrix}
\sin(\frac{\alpha_{0}}{2}) \\
-\cos(\frac{\alpha_{0}}{2}) 
\end{bmatrix}\begin{bmatrix}
\sin(\frac{\alpha_{0}}{2}) & -\cos(\frac{\alpha_{0}}{2})
\end{bmatrix}
\begin{bmatrix}
\cos \theta_{0}\\
\sin \theta_{0}
\end{bmatrix}
\end{align*}
Using equation \eqref{eq:hh_properties}, we have: $\alpha_{0} = 2\theta_{0}$. Thus, the above equation reduces to:
\begin{align*}
&=2r \begin{bmatrix}
\sin(\theta_{0}) \\
-\cos(\theta_{0}) 
\end{bmatrix}\begin{bmatrix}
\sin(\theta_{0}) & -\cos(\theta_{0})
\end{bmatrix}
\begin{bmatrix}
\cos \theta_{0}\\
\sin \theta_{0}
\end{bmatrix} = \begin{bmatrix}
0.\\
0.
\end{bmatrix}
\end{align*}
Hence, the linear functions given by equations \eqref{eq:func_val_first} and \eqref{eq:func_val_last_reduced} are equal at $\varphi=\theta_{0}$. \\
This proves that the function is continuous for Case 2.
\end{proof}


\newpage 
\clearpage

\section{Selection of $\gamma$ using cross validation}\label{sec:gamma_selection_cv}
Using $5000$ held out samples from CIFAR-10, we tested $6$ different values of $\gamma$ shown in Table \ref{table:gamma_selection_appendix} and selected $\gamma = 0.5$ because it resulted in less than $0.5\%$ decrease in standard accuracy while $4.96\%$ increase in provably robust accuracy. We used the LipConvnet-$5$ network with the $2D$ Householder activation function of order $2$ i.e $\sigma_{\theta}$.

\begin{table}[h!]
    \centering
    \renewcommand{\arraystretch}{1.3} 
    \begin{tabular}{l l|l l l l l}
    \hline
        \multirow{2}{*}{\textbf{Architecture}} & \multirow{2}{*}{$\gamma$} & \multirow{2}{1.5cm}{\textbf{Standard Accuracy}} & \multicolumn{3}{c}{\textbf{Provable Robust Acc.} ($\rho=$)} & \multirow{1}{1.5cm}{\textbf{Increase}}\\ 
         &  &  & 36/255 & 72/255 & 108/255 & (108/255) \\ \hline
        \multirow{6}{*}{LipConvnet-5} & $0.$ & 75.82\% & 59.66\% & 42.78\% & 26.92\% & \_ \\
         & $0.10$ & 75.58\% & 59.74\% & 42.94\% & 28.04\% & +0.94\% \\
         & $0.25$ & 75.54\% & 60.22\% & 43.98\% & 29.50\% & +2.58\% \\ 
         & $0.50$ & 75.30\% & \textbf{60.08\%} & \textbf{45.36\%} & \textbf{31.88\%} & \textbf{+4.96\%} \\ 
         & $0.75$ & 74.14\% & 60.36\% & 46.08\% & 33.44\% & +6.52\% \\ 
         & $1.00$ & 73.86\% & 60.30\% & 46.80\% & 34.60\% & +7.68\% \\ \hline
    \end{tabular}
    \vskip 0.1cm
    \caption{Results for provable robustness against adversarial examples on the CIFAR-10 dataset for cross validation using $5000$ held out samples from the training set. }
    \label{table:gamma_selection_appendix}
\end{table}

\section{Differences between $l_{1}$ and $l_{2}$ certificate}\label{sec:norm_differences}
By the equivalence of norms, we have the following result: 
\begin{align}
\bx \in \mathbb{R}^{d}, \qquad  \frac{1}{\sqrt{d}} \lVert \bx \rVert_{1} \leq  \lVert \bx \rVert_{2}  \leq \sqrt{d} \lVert \bx \rVert_{1} \label{eq:norm_equivalence}
\end{align}

Let us assume that we have an $l_{1}$ certificate for the input $\bx$ so that the prediction of a neural network remains constant in a region of $l_{1}$ radius $\rho_{1}$ around the input $\bx$ i.e: 
\begin{align}
\lVert \bx^{*} - \bx \rVert_{1} \leq \rho_{1} \label{eq:l1_orig}
\end{align}

We want to compute the $l_{2}$ certificate implied by the certificate given in equation \eqref{eq:l1_orig}. Let $\rho_{2}$ be the $l_{2}$ certificate so that: 
\begin{align*}
\lVert \bx^{*} - \bx \rVert_{2} \leq \rho_{2} \implies \lVert \bx^{*} - \bx \rVert_{1} \leq \rho_{1}
\end{align*}
Using equation \eqref{eq:norm_equivalence}, we have the following set of equations:
\begin{align*}
\lVert \bx^{*} - \bx \rVert_{2} \leq \rho_{2} \implies \frac{1}{\sqrt{d}}\lVert \bx^{*} - \bx \rVert_{1} \leq \rho_{2} \implies  \lVert \bx^{*} - \bx \rVert_{1} \leq \sqrt{d} \rho_{2}
\end{align*}
Using equation \eqref{eq:l1_orig}, we have the following: 
\begin{align*}
\sqrt{d} \rho_{2} \leq \rho_{1} \implies \rho_{2} \leq \frac{\rho_{1}}{\sqrt{d}}
\end{align*}
Hence, the $l_{2}$ norm certificate induced by the $l_{1}$ norm certificate can be significantly smaller for high dimensional inputs. Using the $l_{1}$ certificate used in \cite{levinefeizi2021} i.e $\rho_{1} = 4$ and $d = 32 \sqrt{3}$ for CIFAR-10 and CIFAR-100, we get an implied certificate of $\rho_{2} = 4/(32 \sqrt{3}) = 0.07225$. In contrast, in this work we study $l_{2}$ robustness for much higher certificate values i.e $36/255 = 0.14118$.

\section{Results using revised Lipschitz constants}
In Tables \ref{table:provable_robust_acc_cifar100_approx_appendix} and \ref{table:provable_robust_acc_cifar10_approx_appendix}, we show results where the Lipschitz constant was computed using product of Lipschitz constant of all layers. The Lipschitz constant of each layer was computed using direct power iteration on the linear layer (using 50 iterations) and not using the approximation bound provided in \cite{singlafeizi2021}.

\begin{table}[t]
    \centering
    \renewcommand{\arraystretch}{1.3} 
    \begin{tabular}{l l|l l l l l}
    \hline
        \multirow{2}{*}{\textbf{Architecture}} & \multirow{2}{*}{\textbf{Methods}} & \multirow{2}{1.5cm}{\textbf{Standard Accuracy}} & \multicolumn{3}{c}{\textbf{Provable Robust Acc.} ($\rho=$)} & \multirow{1}{1.5cm}{\textbf{Increase}}\\ 
         &  &  & 36/255 & 72/255 & 108/255 & (Standard) \\ \hline
        \multirow{4}{*}{LipConvnet-5} & SOC + $\mathrm{MaxMin}$ & 42.71\% & 27.86\% & 17.45\% & 9.99\% & \_ \\
         & \qquad\ \textbf{+ LLN} & 45.86\% & 31.93\% & 21.17\% & 13.21\% & +3.15\% \\
         & \qquad\ \textbf{+} $\mathbf{ HH}$ & \textbf{46.36\%} & 32.64\% & 21.19\% & 13.12\% & +\textbf{3.65\%} \\ 
         & \qquad\ \textbf{+ CR} & 45.82\% & \textbf{32.99\%} & \textbf{22.48\%} & \textbf{14.79\%} & +3.11\% \\ \hline
        \multirow{4}{*}{LipConvnet-10} & SOC + $\mathrm{MaxMin}$ & 43.72\% & 29.39\% & 18.56\% & 11.16\% & \_ \\
         & \qquad\ \textbf{+ LLN} & 46.88\% & 33.32\% & 22.08\% & 13.87\% & +3.16\% \\
         & \qquad\ \textbf{+} $\mathbf{ HH}$ & \textbf{47.96\%} & 34.30\% & 22.35\% & 14.48\% & +\textbf{4.24\%} \\ 
         & \qquad\ \textbf{+ CR} & 47.07\% & \textbf{34.53\%} & \textbf{23.50\%} & \textbf{15.66\%} & +3.35\% \\ \hline
        \multirow{4}{*}{LipConvnet-15} & SOC + $\mathrm{MaxMin}$ & 42.92\% & 28.81\% & 17.93\% & 10.73\% & \_ \\
         & \qquad\ \textbf{+ LLN} & 47.72\% & 33.52\% & 21.89\% & 13.76\% & +4.80\% \\ 
         & \qquad\ \textbf{+} $\mathbf{ HH}$ & \textbf{47.72\%} & 33.97\% & 22.45\% & 13.81\% & +\textbf{4.80\%} \\
         & \qquad\ \textbf{+ CR} & 47.61\% & \textbf{34.54\%} & \textbf{23.16\%} & \textbf{15.09\%} & +4.69\% \\ \hline
        \multirow{4}{*}{LipConvnet-20} & SOC + $\mathrm{MaxMin}$ & 43.06\% & 29.34\% & 18.66\% & 11.20\% & \_ \\
         & \qquad\ \textbf{+ LLN} & 46.86\% & 33.48\% & 22.14\% & 14.10\% & +3.80\% \\ 
         & \qquad\ \textbf{+} $\mathbf{ HH}$ & 47.71\% & 34.22\% & 22.93\% & 14.57\% & +4.65\% \\ 
         & \qquad\ \textbf{+ CR} & \textbf{47.84\%} & \textbf{34.77\%} & \textbf{23.70\%} & \red{\textbf{15.83\%}} & \textbf{+4.78\%} \\ \hline
        \multirow{4}{*}{LipConvnet-25} & SOC + $\mathrm{MaxMin}$ & 43.37\% & 28.59\% & \red{18.17\%} & 10.85\% & \_ \\
         & \qquad\ \textbf{+ LLN} & 46.32\% & 32.87\% & 21.53\% & 13.86\% & +2.95\% \\ 
         & \qquad\ \textbf{+} $\mathbf{ HH}$ & \textbf{47.70\%} & 34.00\% & 22.67\% & 14.57\% & \textbf{+4.33\%} \\ 
         & \qquad\ \textbf{+ CR} & 46.87\% & \textbf{34.09\%} & \textbf{23.41\%} & \textbf{15.61\%} & +3.50\% \\ \hline
        \multirow{4}{*}{LipConvnet-30} & SOC + $\mathrm{MaxMin}$ & 42.87\% & 28.74\% & 18.47\% & \red{11.20\%} & \_ \\
         & \qquad\ \textbf{+ LLN} & 46.18\% & 32.82\% & 21.52\% & 13.52\% & +3.31\% \\ 
         & \qquad\ \textbf{+} $\mathbf{ HH}$ & 46.80\% & 33.72\% & 22.70\% & 14.31\% & +3.93\% \\
         & \qquad\ \textbf{+ CR} & \textbf{46.92\%} & \textbf{34.17\%} & \textbf{23.21\%} & \textbf{15.84\%} & \textbf{+4.05\%} \\ \hline
        \multirow{4}{*}{LipConvnet-35} & SOC + $\mathrm{MaxMin}$ & 42.42\% & 28.34\% & 18.10\% & 10.96\% & \_ \\
         & \qquad\ \textbf{+ LLN} & 45.22\% & 32.10\% & 21.28\% & 13.25\% & +2.80\% \\ 
         & \qquad\ \textbf{+} $\mathbf{ HH}$ & 46.21\% & 32.80\% & 21.55\% & 14.13\% & +3.79\% \\ 
         & \qquad\ \textbf{+ CR} & \textbf{46.88\%} & \textbf{33.64\%} & \textbf{23.34\%} & \textbf{15.73\%} & \textbf{+4.46\%} \\ \hline
        \multirow{4}{*}{LipConvnet-40} & SOC + $\mathrm{MaxMin}$ & 41.84\% & 28.00\% & 17.40\% & 10.28\% & \_ \\
         & \qquad\ \textbf{+ LLN} & 42.94\% & 29.71\% & 19.30\% & 11.99\% & +1.10\% \\ 
         & \qquad\ \textbf{+} $\mathbf{ HH}$ & \textbf{45.84\%} & \textbf{32.79\%} & 21.98\% & 14.07\% & \textbf{+4.00\%} \\ 
         & \qquad\ \textbf{+ CR} & 45.03\% & \red{32.56\%} & \textbf{22.37\%} & \textbf{14.76\%} & +3.19\% \\ \hline
    \end{tabular}
    \vskip 0.1cm
    \caption{Results for provable robustness against adversarial examples on the CIFAR-100 dataset. For all networks in this table, the maximum deviation of the Lipschitz constant from $1$ was measured to be $2.4609 \times 10^{-5}$. The values shown in red were reduced by $0.01\%$ from the corresponding values in Table \ref{table:provable_robust_acc_cifar100_main} due to the correction in Lipschitz constant. All other values remained unchanged.}
    \label{table:provable_robust_acc_cifar100_approx_appendix}
\end{table}

\begin{table}[t]
    \centering
    \renewcommand{\arraystretch}{1.3} 
    \begin{tabular}{l l|l l l l l}
    \hline
        \multirow{2}{*}{\textbf{Architecture}} & \multirow{2}{*}{\textbf{Methods}} & \multirow{2}{1.5cm}{\textbf{Standard Accuracy}} & \multicolumn{3}{c}{\textbf{Provable Robust Acc.} ($\rho=$)} & \multirow{1}{1.5cm}{\textbf{Increase}}\\ 
         &  &  & 36/255 & 72/255 & 108/255 & (108/255) \\ \hline
        \multirow{3}{*}{LipConvnet-5} & SOC + $\mathrm{MaxMin}$ & 75.78\% & 59.18\% & 42.01\% & 27.09\% & \_ \\  
         & \quad\quad\textbf{ +\ }$\mathbf{HH}$ & \textbf{76.30\%} & 60.12\% & 43.20\% & 27.39\% & +0.30\% \\ 
         & \quad\quad\textbf{ +\ CR} & 75.31\% & \textbf{60.37\%} & \textbf{45.62\%} & \textbf{32.38\%} & \textbf{+5.29\%} \\
         \hline
        \multirow{3}{*}{LipConvnet-10} & SOC + $\mathrm{MaxMin}$ & 76.45\% & 60.86\% & 44.15\% & 29.15\% & \_ \\ 
         & \quad\quad\textbf{ +\ }$\mathbf{HH}$ & \textbf{76.86\%} & 61.52\% & 44.91\% & 29.90\% & +0.75\% \\ 
         & \textbf{\qquad \ + CR} & 76.23\% & \textbf{62.57\%} & \textbf{47.70\%} & \textbf{34.15\%} & \textbf{+5.00\%} \\ 
        \hline
        \multirow{3}{*}{LipConvnet-15} & SOC + $\mathrm{MaxMin}$ & 76.68\% & 61.36\% & \red{44.27\%} & 29.66\% & \_ \\ 
         & \quad\quad\textbf{ +\ }$\mathbf{HH}$ & \textbf{77.41\%} & 61.92\% & 45.60\% & 31.10\% & +1.44\% \\ 
         & \textbf{\qquad \ + CR} & 76.39\% & \textbf{62.96\%} & \textbf{48.47\%} & \textbf{35.47\%} & \textbf{+5.81\%} \\ 
         \hline
        \multirow{3}{*}{LipConvnet-20} & SOC + $\mathrm{MaxMin}$ & 76.90\% & 61.87\% & 45.79\% & 31.08\% & \_ \\ 
         & \quad\quad\textbf{ +\ }$\mathbf{HH}$ & \textbf{76.99\%} & 61.76\% & 45.59\% & 30.99\% & -0.09\% \\
         & \textbf{\qquad \ + CR} & 76.34\% & \textbf{62.63\%} & \red{\textbf{48.68\%}} & \textbf{36.04\%} & \textbf{+4.96\%} \\ 
         \hline
        \multirow{3}{*}{LipConvnet-25} & SOC + $\mathrm{MaxMin}$ & 75.24\% & 60.17\% & \red{43.54\%} & 28.60\% & \_ \\ 
         & \quad\quad\textbf{ +\ }$\mathbf{HH}$ & \textbf{76.37\%} & 61.50\% & 44.72\% & 29.83\% & +1.23\% \\ 
         & \qquad\ \textbf{+ CR} & 75.21\% & \textbf{61.98\%} & \textbf{47.93\%} & \textbf{34.92\%} & \textbf{+6.32\%} \\ 
         \hline
        \multirow{3}{*}{LipConvnet-30} & SOC + $\mathrm{MaxMin}$ & 74.51\% & 59.06\% & 42.46\% & \red{28.04\%} & \_ \\ 
         & \quad\quad\textbf{ +\ }$\mathbf{HH}$ & \textbf{75.25\%} & 59.90\% & 43.85\% & 29.35\% & +1.30\% \\ 
         & \qquad\ \textbf{+ CR} & 74.23\% & \textbf{60.64\%} & \textbf{46.51\%} & \textbf{34.08\%} & \textbf{+6.04\%} \\          \hline
        \multirow{3}{*}{LipConvnet-35} & SOC + $\mathrm{MaxMin}$ & 73.73\% & 58.50\% & 41.75\% & 27.20\% & \_ \\ 
         & \quad\quad\textbf{ +\ }$\mathbf{HH}$ & \textbf{75.44\%} & 61.05\% & 45.38\% & 30.85\% & +3.65\% \\ 
         & \qquad\ \textbf{+ CR} & 74.25\% & \textbf{61.30\%} & \textbf{47.60\%} & \textbf{35.21\%} & \textbf{+8.01\%} \\ 
        \hline
        \multirow{3}{*}{LipConvnet-40} & SOC + $\mathrm{MaxMin}$ & 71.63\% & 54.39\% & 37.92\% & 24.13\% & \_ \\ 
         & \quad\quad\textbf{ +\ }$\mathbf{HH}$ & \textbf{73.24\%} & 58.12\% & \red{42.23\%} & 28.48\% & +4.35\% \\ 
         & \qquad\ \textbf{+ CR} & 72.59\% & \textbf{59.04\%} & \textbf{44.92\%} & \textbf{32.87\%} & \textbf{+8.74\%} \\
         \hline
    \end{tabular}
    \vskip 0.1cm
    \caption{Results for provable robustness against adversarial examples on the CIFAR-10 dataset. For all networks in this table, the maximum deviation of the Lipschitz constant from $1$ was measured to be $2.2072 \times 10^{-5}$. The values shown in red were reduced by $0.01\%$ from the corresponding values in Table \ref{table:provable_robust_acc_cifar10_main} due to the correction in Lipschitz constant. All other values remained unchanged.}
    \label{table:provable_robust_acc_cifar10_approx_appendix}
\end{table}

\clearpage
\newpage

\section{Verification that $\sigma_{\theta}(z_1, z_2)$ always lies on one side of the hyperplane}\label{sec:verification}
Consider the case: $z_1\sin\left(\theta/2\right) - z_2\cos \left(\theta/2\right) > 0$ \\
In this case $\sigma_{\theta}(z_1, z_2) = (z_1, z_2)$ and the result follows directly.\\
Consider the other case: $z_1\sin\left(\theta/2\right) - z_2\cos \left(\theta/2\right) \leq 0$
\begin{align*}
&\begin{bmatrix}
    a_1 \\
    a_2
\end{bmatrix} = \begin{bmatrix}
\cos\theta & \sin\theta\\
\sin\theta & -\cos\theta
\end{bmatrix}\begin{bmatrix}
z_1\\
z_2
\end{bmatrix} = \begin{bmatrix}
    z_1\cos\theta + z_2\sin\theta \\
    z_1\sin\theta - z_2\cos\theta
\end{bmatrix} \\
&a_1 \sin \left(\theta/2\right) - a_2 \cos \left(\theta/2\right) = \left(z_1\cos\theta + z_2\sin\theta\right)\sin \left(\theta/2\right) - \left(z_1\sin\theta - z_2\cos\theta\right)\cos  \left(\theta/2\right) \\
& = z_1\left(\cos\theta\sin\left(\theta/2\right) - \sin\theta \cos \left(\theta/2\right)\right) + z_2\left(\sin\theta\sin \left(\theta/2\right) + \cos\theta \cos \left(\theta/2\right)\right) \\
& = - z_1\sin\left(\theta/2\right) + z_2\cos \left(\theta/2\right)
\end{align*}
Since $z_1\sin\left(\theta/2\right) - z_2\cos \left(\theta/2\right) \leq 0$, we have $- z_1\sin\left(\theta/2\right) + z_2\cos \left(\theta/2\right) \geq 0$.

\section{Higher order Householder activation functions}\label{sec:higher_order_hh}
We know that $\mathrm{MaxMin}(z_1, z_2) = (\max(z_1, z_2),\ \min(z_1, z_2))$ where $z_1, z_2 \in \mathbb{R}$. Because $\max(z_1, z_2) > \min(z_1, z_2)$, applying $\mathrm{MaxMin}$ again gives the same result i.e $\mathrm{MaxMin} \circ \mathrm{MaxMin} = \mathrm{MaxMin}$. Now consider the function $\sigma_{\theta}$ (discussed in the maintext, given below for convenience):
\begin{align*}
\sigma_{\theta}\left(z_1, z_2\right) = \begin{cases}
\begin{bmatrix}
1 & 0\\
0 & 1
\end{bmatrix}\begin{bmatrix}
z_1\\
z_2
\end{bmatrix} \qquad \qquad \qquad \mathrm{if}\ \  z_1\sin\left(\theta/2\right) - z_2\cos\left(\theta/2\right) > 0 \\
\begin{bmatrix}
\cos\theta & \sin\theta\\
\sin\theta & -\cos\theta
\end{bmatrix}\begin{bmatrix}
z_1\\
z_2
\end{bmatrix} \qquad \mathrm{if}\ \  z_1\sin\left(\theta/2\right) - z_2\cos\left(\theta/2\right)  \leq 0 \\
\end{cases} 
\end{align*}

From Figure \ref{fig:hh_intro}, we observe that if $(u_1, u_2) = \sigma_{\theta}(z_1, z_2)$, then $(u_1, u_2)$ always lies on the right side of the hyperplane (pink colored region). In other words, $u_1 \sin \left(\theta/2\right) - u_2 \cos \left(\theta/2\right) > 0\ \ \forall\ z_1, z_2$ (proof in Appendix \ref{sec:verification}). This further implies that $\sigma_{\theta} \circ \sigma_{\theta} = \sigma_{\theta}$. 

However from Figure \ref{subfig:hh_order2_division} in the maintext, we observe that if we use a different angle $\phi$ where $\phi \neq \theta + 2n\pi$ for some $n \in \mathbb{Z}$, then $\sigma_{\phi}(u_1, u_2) \neq (u_1, u_2)$ for all $(u_1, u_2)$ in the pink colored region ($u_1 \sin \left(\theta/2\right) - u_2 \cos \left(\theta/2\right) > 0$). This motivates us to construct the function $\sigma^{(n)}: \mathbb{R}^{2} \to \mathbb{R}^{2}$ defined as follows:
\begin{align}
    \sigma^{(n)} = \underbrace{\sigma_{\theta} \circ \sigma_{\theta} \circ \sigma_{\theta} \dotsc \circ \sigma_{\theta}}_{n\ \text{times},\ \theta\text{'s can be different}} \label{eq:gn_def}
\end{align}

\begin{figure*}[t]
\centering
\begin{subfigure}{0.4\linewidth}
\centering
\includegraphics[trim=8cm 0cm 6cm 0cm, clip, width=\linewidth]{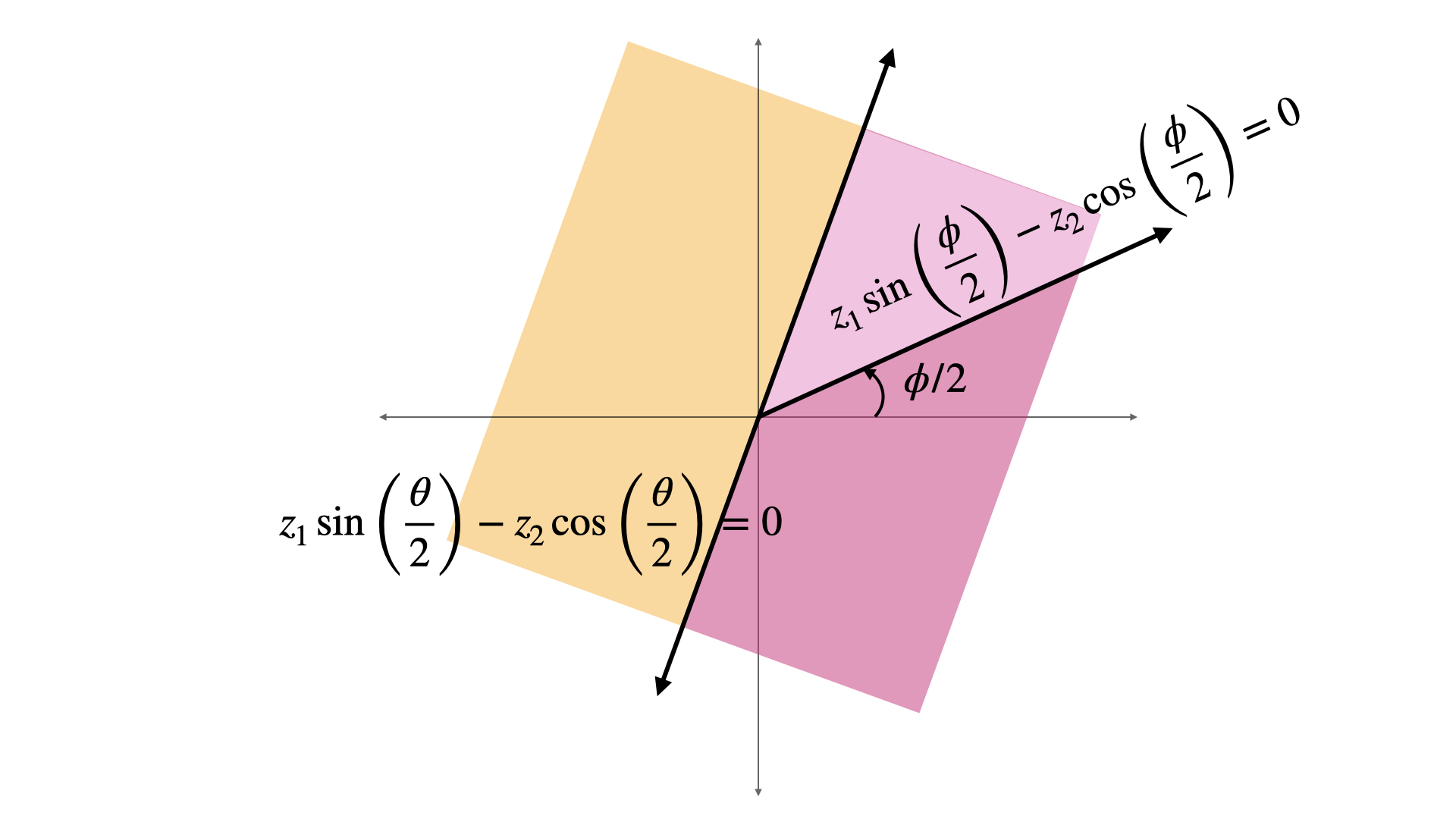}
\caption{The output of $\sigma_{\theta}$ always lies in the pink region. Applying $\sigma_{\phi}$ on this where $\phi \neq \theta + 2n\pi, n\in \mathbb{Z}$ further divides this region into two linear regions (light and original pink).}
\label{subfig:hh_order2_division}
\end{subfigure}\quad
\begin{subfigure}{0.5\linewidth}
\centering
\includegraphics[trim=3cm 0cm 0cm 0cm, clip, width=\linewidth]{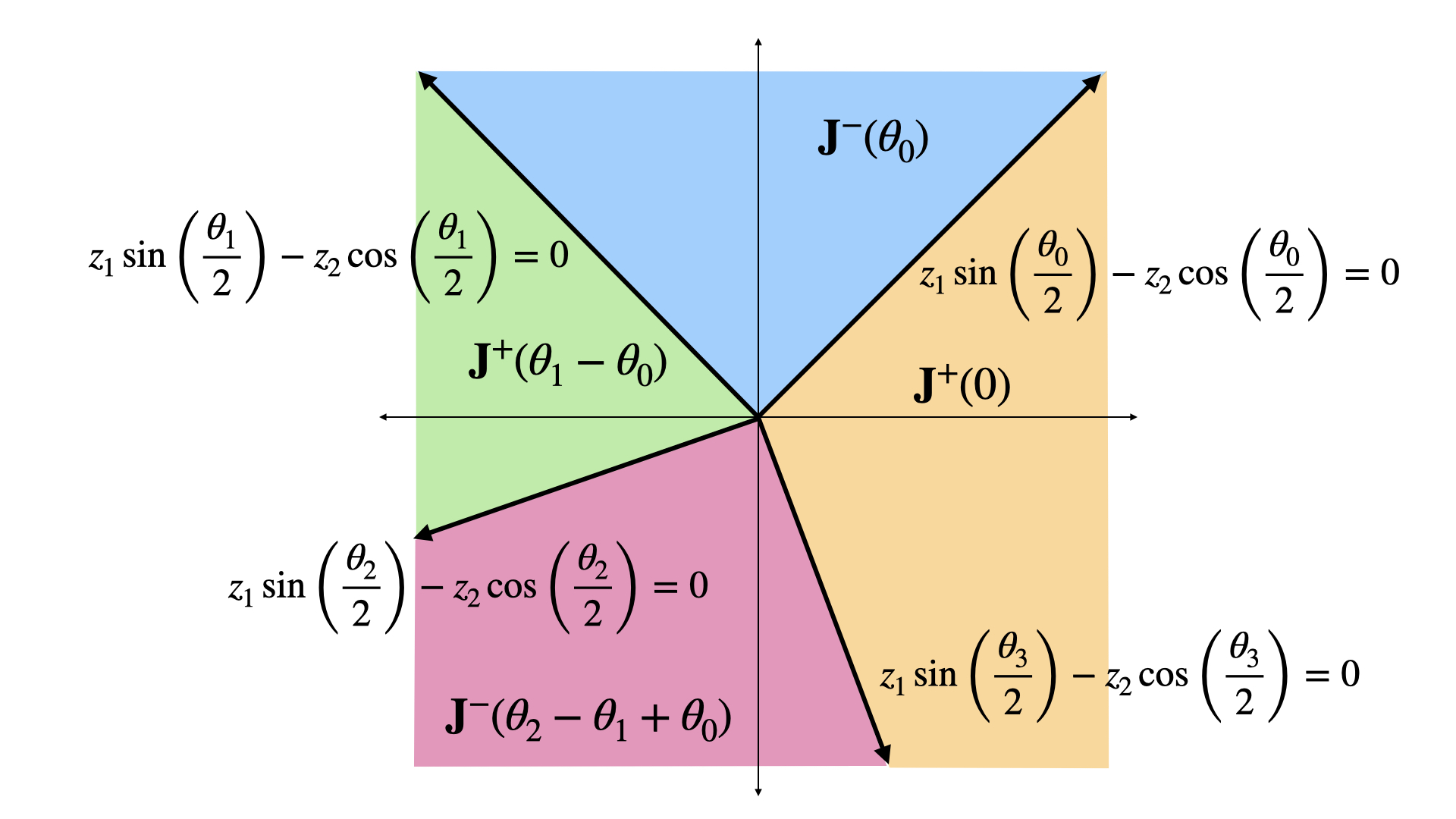}
\caption{In each colored region, the function is linear with the Jacobian mentioned. For the function to be continuous, we must have $\theta_{3} - \theta_{2} + \theta_{1} - \theta_{0} = \pi$. Thus, any $3$ of $\{\theta_{0}, \theta_{1}, \theta_{2}, \theta_{3}\}$ can be chosen as learnable parameters.}
\label{subfig:hh_order2_demo}
\end{subfigure}
\caption{Constructing Higher Order Householder activations ($\bJ^{+}$ and $\bJ^{-}$ defined in equation \eqref{eq:J_plus_minus_defs})}
\label{fig:hh_order2_demo}
\end{figure*}

Clearly, $\sigma^{(n)}$ has a larger number of linear regions than $\sigma_{\theta}$ and thus expected to have more expressive power. However, a drawback of using $\sigma^{(n)}$ is that it requires a sequential application of $\sigma_{\theta}$ which can be expensive if the number of iterations $n$ is large. To address this limitation, first observe that $\sigma_{\theta}$ realizes the same linear function for $(z_1, z_2)$ and $(c z_1, c z_2)$ when $c > 0$ i.e $\nabla_{(z_1, z_2)}\ \sigma_{\theta} = \nabla_{(c z_1, c z_2)}\ \sigma_{\theta}$. Since $\sigma_{\theta}$ is piecewise linear, $\sigma_{\theta}(c z_1, c z_2) = c\sigma_{\theta}(z_1, z_2)$. Thus, the input of the next function in the iteration is scaled by $c$ as well and its linear function (or the Jacobian) remains unchanged. By induction, same holds for all the subsequent iterations. By chain rule, the Jacobian of composition of functions is equal to the product of Jacobian of each individual function. Since the Jacobian of each function is unchanged on scaling  by $c > 0$, the Jacobian $\nabla_{(z_1, z_2)}\ \sigma^{(n)}$ also remains unchanged: $\nabla_{(z_1, z_2)}\ \sigma^{(n)} = \nabla_{(c z_1, c z_2)}\ \sigma^{(n)}$. This suggests that it is possible to determine the Jacobian $\nabla_{(z_1, z_2)}\  \sigma^{(n)}$ for the input $(z_1, z_2)$ by first converting to the polar coordinates $(\sqrt{z_1^{2} + z_2^{2}},\ \varphi)$ and then using the phase angle $\varphi$ alone (where $\cos(\varphi) = z_1/\sqrt{z_1^{2} + z_2^{2}},\ \sin(\varphi) = z_2/\sqrt{z_1^{2} + z_2^{2}}$).

This motivates us to construct another GNP piecewise linear activation that only depends on the phase of the input but unlike $\sigma_{\theta}$, it is allowed to have more than $2$ linear regions without requiring a sequential application. This construction is given in the following theorem (example in Figure \ref{subfig:hh_order2_demo}): 
\begin{thm}\label{thm_appendix:high_order_hh}
Given: $0 \leq \theta_{0} < \theta_{1} \cdots < \theta_{2n} = 2\pi + \theta_{0}$ such that $\sum_{i=0}^{n-1} (\theta_{2i+1} - \theta_{2i}) = \pi$ and $\alpha_{i} = 2\sum_{j=0}^{i} \theta_{i-j} (-1)^{j}$,
The function $\sigma_{\Theta}: \mathbb{R}^{2} \to \mathbb{R}^{2}$ is continuous, GNP and $1$-Lipschitz where $\Theta = [\theta_{0}, \theta_{1}, \dotsc, \theta_{2n-1}]$ (also called Householder Activation of order $n$ in $2$ dimensions):
\begin{align*}
\sigma_{\Theta}(z_1, z_2) = 
\begin{bmatrix}
\cos\alpha_{i} & \sin\alpha_{i}\\
(-1)^{i}\sin\alpha_{i} & (-1)^{i+1}\cos\alpha_{i}
\end{bmatrix}
\begin{bmatrix}
z_1\\
z_2
\end{bmatrix} \qquad \qquad \theta_{i} \leq \varphi <  \theta_{i+1} 
\end{align*}
where $\varphi \in [\theta_{0},\ \theta_{2n} = 2\pi + \theta_{0})$ and $\cos(\varphi) = z_1/\sqrt{z_1^{2} + z_2^{2}},\ \ \sin(\varphi) = z_2/\sqrt{z_1^{2} + z_2^{2}}$.
\end{thm}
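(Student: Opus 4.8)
The plan is to verify the three asserted properties in the natural order. First I would note that on the interior of each angular sector $\sigma_{\Theta}$ is a linear map whose matrix, $\bM_i = \begin{bmatrix}\cos\alpha_i & \sin\alpha_i\\ (-1)^{i}\sin\alpha_i & (-1)^{i+1}\cos\alpha_i\end{bmatrix}$, equals $\bJ^{+}(\alpha_i)$ when $i$ is odd and $\bJ^{-}(\alpha_i)$ when $i$ is even; in both cases $\bM_i$ is orthogonal, so on each open sector $\sigma_{\Theta}$ is linear with orthogonal Jacobian, hence locally GNP and $1$-Lipschitz. All the work is therefore in establishing \emph{global continuity}: since each piece is linear, $\sigma_{\Theta}$ is automatically continuous away from the rays $\varphi = \theta_i$, and it remains to check those finitely many rays, grouped into two cases — the interior rays $\varphi=\theta_i$ for $1\le i\le 2n-1$, and the wrap-around ray $\varphi=\theta_0 = \theta_{2n}-2\pi$.

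For the interior rays, the first ingredient is the recursion $\alpha_i + \alpha_{i-1} = 2\theta_i$, obtained by peeling the $j=0$ term off $\alpha_i = 2\sum_{j=0}^{i}\theta_{i-j}(-1)^{j}$. Writing the input in shifted polar form $\bz = r(\cos\varphi,\sin\varphi)$, continuity at $\varphi=\theta_i$ is equivalent to $(\bM_i-\bM_{i-1})(\cos\theta_i,\sin\theta_i)^{T}=\mathbf{0}$. Since $\bM_{i-1}$ and $\bM_i$ have opposite determinant, the bottom row of the difference turns into a \emph{sum} of sines/cosines, and applying sum-to-product identities makes $\bM_i-\bM_{i-1}$ factor as a rank-one product whose right factor is proportional to $\bigl(\sin\tfrac{\alpha_i+\alpha_{i-1}}{2},\,-\cos\tfrac{\alpha_i+\alpha_{i-1}}{2}\bigr)=(\sin\theta_i,-\cos\theta_i)$, using the recursion. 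Applying this to $(\cos\theta_i,\sin\theta_i)^{T}$ gives the inner product $\sin\theta_i\cos\theta_i-\cos\theta_i\sin\theta_i=0$, so the difference vanishes and $\sigma_{\Theta}$ is continuous there.

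For the wrap-around ray, the key computation is $\alpha_{2n-1}$: reindexing the alternating sum and using the hypothesis $\sum_{i=0}^{n-1}(\theta_{2i+1}-\theta_{2i})=\pi$ gives $\alpha_{2n-1}=2\pi$, so the last piece is $\bM_{2n-1}=\bJ^{+}(2\pi)=\bI$; meanwhile $\alpha_0=2\theta_0$, and a direct check shows $\bM_0(\cos\theta_0,\sin\theta_0)^{T}=(\cos(2\theta_0-\theta_0),\sin(2\theta_0-\theta_0))^{T}=(\cos\theta_0,\sin\theta_0)^{T}$, matching the identity. Hence $\sigma_{\Theta}$ is continuous on all of $\mathbb{R}^{2}$. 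Finally, gluing is routine: a globally continuous piecewise-linear map whose Jacobian is orthogonal on each sector is $1$-Lipschitz (along any segment it is a concatenation of finitely many isometric linear pieces agreeing at junctions) and GNP (its Jacobian equals one of the orthogonal $\bM_i$ almost everywhere). The main obstacle is the trigonometric bookkeeping of the interior-ray case — getting the rank-one factorization of $\bM_i-\bM_{i-1}$ right while correctly tracking the alternating $(-1)^{i}$ signs — but once the recursion $\alpha_i+\alpha_{i-1}=2\theta_i$ is in hand, this reduces to a short sum-to-product calculation.
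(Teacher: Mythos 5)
Your proposal is correct and follows essentially the same route as the paper's proof: shifted polar coordinates, the recursion $\alpha_i + \alpha_{i-1} = 2\theta_i$, the sum-to-product rank-one factorization of $\bM_i - \bM_{i-1}$ with right factor $(\sin\theta_i,\,-\cos\theta_i)$ for the interior rays, and $\alpha_{2n-1} = 2\pi$ for the wrap-around ray. Your handling of the wrap-around case (checking directly that $\bM_0$ fixes the direction $(\cos\theta_0,\sin\theta_0)$) and your explicit gluing argument for global $1$-Lipschitzness are only minor presentational variants of what the paper does.
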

Using the definition of $\alpha_{i}$, $\alpha_{2n-1}$ can be computed as follows:
\begin{align*}
&\alpha_{2n-1} = 2\sum_{j=0}^{2n-1} \theta_{2n-1-j} (-1)^{j} = 2\sum_{j=0}^{n-1} \left(\theta_{2n-1-2j} - \theta_{2n-2-2j}\right) = 2\sum_{j=0}^{n-1} \left(\theta_{2j+1} - \theta_{2j}\right) = 2\pi \\
&\sigma_{\Theta}(z_1, z_2) = 
\begin{bmatrix}
\cos \alpha_{2n-1} & \sin \alpha_{2n-1}\\
-\sin \alpha_{2n-1} & \cos \alpha_{2n-1}
\end{bmatrix}
\begin{bmatrix}
z_1\\
z_2
\end{bmatrix} = (z_1, z_2) \qquad \qquad \theta_{2n-1} \leq \varphi <  \theta_{2n} (= \theta_{0} + 2\pi)
\end{align*}
By continuity, $\sigma_{\Theta}(z_1,z_2)=(z_1,z_2)$ for $\varphi = \theta_{0}$. Thus if we set $\theta_{0}=0$, $\sigma_{\Theta}$ is fixed to be identity when $\varphi=0$ (or $z_2=0$). However, a learnable $\theta_{0}$ offers the flexibility of choosing arbitrary intervals around the $\varphi = \theta_{0}$ to be the identity function (while of course allowing $\theta_{0}=0$). Since we can choose any interval of $2\pi$ for the phase angle, we choose $\varphi \in [\theta_{0},\  \theta_{2n} = \theta_{0} + 2\pi)$ instead of the usual $[0,\  2\pi)$ to allow this possibility. We call $(\sqrt{z_1^{2} + z_2^{2}},\ \varphi)$, the \textit{shifted polar coordinates}.

Additionally, we make the following observations about Theorem \ref{thm:high_order_hh}. First, by construction $\sigma_{\Theta}$ has $2n$ linear regions. Second, since $\sum_{i=0}^{n-1} (\theta_{2i+1} - \theta_{2i}) = \pi$, the sum of angles subtended by linear regions with determinant $-1$ equals $\pi$. This in turn implies that sum of angles subtended by linear regions with determinant $+1$ must also equal $\pi$. Third, again using $\sum_{j=0}^{n-1} \left(\theta_{2j+1} - \theta_{2j}\right) = \pi$, we know that only $2n-1$ of the $2n$ parameters in $[\theta_{0},\ \theta_{1},\ \theta_{2},\ \dotsc,\ \theta_{2n-1}]$ can be chosen independently implying that $\sigma_{\Theta}$ has $2n-1$ learnable parameters. In contrast, $\sigma^{(n)}$ has only $n$ learnable parameters. Fourth, when $n=1$, $\sigma_{\Theta}$ reduces to the original $\sigma_{\theta}$ activation.

Because every function of the form $\sigma^{(n)}$ (equation \eqref{eq:gn_def}) can have potentially $2^{n}$ linear regions, while $\sigma_{\Theta}$ has only $2n$ linear regions, $\sigma_{\Theta}$ cannot express every function of the form $\sigma^{(n)}$. The primary benefit of using $\sigma_{\Theta}$ is that it can be easily applied by first determining the angle $\varphi$ (using shifted polar coordinates), the region $[\theta_{i}, \theta_{i+1} )$ to which $\varphi$ belongs and the Jacobian for this region. This requires $1$ multiplication with the Jacobian instead of $n$ required for $\sigma^{(n)}$.

\section{Extension to higher dimensions}\label{sec:extensions}
We introduced Householder activation function of Order $1$ in $m$ dimensions in maintext Definition \ref{def:hh_basic}. However, it suffers from the limitation that it has only $2$ linear regions thus limiting its expressive power. The construction given in Appendix Section \ref{sec:higher_order_hh} allows more than $2$ linear regions but is valid only for $2$ dimensional inputs. This motivates us to construct Householder activations that depend on all the $m$ components of input $\bz \in \mathbb{R}^{m},\ (m \geq 3)$ while allowing for more than $2$ linear regions.


A straightforward way of constructing such an activation function is to apply an orthogonal matrix $\bQ \in \mathbb{R}^{m \times m}$, followed by dividing the output $\bQ\bz$ into groups of size $2$ each and then applying $\sigma_{\theta}$ to each group. However, since $1$-Lipschitz neural networks involve multiplication with an orthogonal weight matrix followed by GNP activation anyway, this construction is trivial because it does not lead to additional gains in expressive power over using $2$ dimensional $\sigma_{\Theta}$ activation functions. 

Recall that the function $\sigma_{\bv}$ is given by the following equation:
\begin{align}
\sigma_{\bv}(\bz) = 
\begin{cases}
\bz,\qquad \qquad \qquad \ \ \  \bv^{T}\bz > 0, \\
(\bI - 2\bv\bv^{T})\bz,\qquad \bv^{T}\bz \leq 0. \\
\end{cases} \label{eq:appendix_hh_ndim}
\end{align}
By a similar analysis as for the $2$-dimensional case (Figure \ref{fig:hh_order2_demo} in maintext), a repeated application of $\sigma_{\bv}$ leads to increased number of linear regions and thus higher expressive power. This motivates us to construct the function $\sigma^{(m,n)}: \mathbb{R}^{m} \to \mathbb{R}^{m}$ by applying the function $\sigma_{\bv}$ (equation  \eqref{eq:appendix_hh_ndim}) $n$ times iteratively with different learnable parameter $\bv$ at each iteration:
\begin{align*}
    \sigma^{(m,n)} = \underbrace{\sigma_{\bv} \circ \sigma_{\bv} \circ \sigma_{\bv} \dotsc \circ \sigma_{\bv}}_{n\ \text{times, } \bv\text{'s can be different}} 
\end{align*}

Since $\sigma_{\bv}$ realizes the same linear function for both the inputs $\bz$ and $c \bz$ i.e $\nabla_{\bz}\ \sigma_{\bv} = \nabla_{c \bz}\ \sigma_{\bv}$ when $c > 0$, $\sigma^{(m,n)}$ satisfies this property as well. This suggests that it is possible to determine the Jacobian of $\sigma^{(m,n)}$ for the given input $\bz$ by projecting $\bz$ onto a unit sphere: $\bz/\|\bz\|_{2}$. Moreover, we want our constructed function to have at least $2n$ linear regions while requiring $k$ iterations of $\sigma_{\bv}$ where $k$ is independent of $n$. This motivates the following open question:


\begin{open_problem}
Can non-trivial order-$n$ ($n > 1$) householder activation functions with $2n$ linear regions be constructed for $m$ dimensional input ($m > 2$) using $k$ iterations of $\sigma_{\bv}$ where $k$ is independent of $n$ (but may depend on $m$)?
\end{open_problem}

\section{Additional results on CIFAR-10}\label{appendix:results_cifar10}
The rows "BCOP", "Cayley" and "SOC (baseline)" all use the $\mathrm{MaxMin}$ activation function. "SOC + $\mathrm{HH}$" replaces $\mathrm{MaxMin}$ with $2\mathrm{D}$ Householder activation of order $1$ ($\sigma_{\theta}$), "+ CR" adds Certificate Regularization (CR) with $\gamma=0.1$ (while using $\sigma_{\theta}$ as the activation function).

In Table \ref{table:provable_robust_acc_cifar10_hh2}, the row "SOC + $\mathrm{HH}^{(2)}$" uses Householder activation of order $2$ ($\sigma_{\Theta}$ defined in Theorem \ref{thm:high_order_hh}), "+ CR" adds Certificate Regularization (CR) with $\gamma=0.1$ (while using the $\mathrm{HH}$ activation of order $2$ i.e $\sigma_{\Theta}$ as the activation function).

None of the results in Tables \ref{table:provable_robust_acc_cifar10_shallow}, \ref{table:provable_robust_acc_cifar10_deep} and \ref{table:provable_robust_acc_cifar10_hh2} include Last Layer Normalization (LLN).

\newpage


\begin{table}
    \centering
    \renewcommand{\arraystretch}{1.3} 
    \begin{tabular}{l|l l}
    \hline
        \multirow{2}{*}{\textbf{Architecture}} & \multicolumn{2}{c}{\textbf{Running times (seconds)}} \\
         & $\textbf{MaxMin}$ & $\mathbf{HH}$ \\ \hline
        LipConvnet-5 & 3.7864 & 3.86 \\ 
        LipConvnet-10 & 5.3677 & 5.6014 \\ 
        LipConvnet-15 & 7.234 & 7.3503 \\ 
        LipConvnet-20 & 9.536 & 9.3753 \\ 
        LipConvnet-25 & 11.0512 & 11.2692 \\ 
        LipConvnet-30 & 12.5135 & 13.6866 \\ 
        LipConvnet-35 & 14.5539 & 15.0921 \\ 
        LipConvnet-40 & 17.1907 & 17.1928 \\ \hline
    \end{tabular}
    \vskip 0.1cm
    \caption{Inference times for various networks on the complete test dataset of CIFAR-10 with $10000$ samples. None of these networks include Last Layer Normalization (LLN).}
    \label{table:provable_robust_acc_cifar10_timings}
\end{table}

\begin{table}[h!]
    \centering
    \renewcommand{\arraystretch}{1.3} 
    \begin{tabular}{l l|l l l l l}
    \hline
        \multirow{2}{*}{\textbf{Architecture}} & \multirow{2}{*}{\textbf{Methods}} & \multirow{2}{1.5cm}{\textbf{Standard Accuracy}} & \multicolumn{3}{c}{\textbf{Provable Robust Acc.} ($\rho=$)} & \multirow{1}{1.5cm}{\textbf{Increase}}\\ 
         &  &  & 36/255 & 72/255 & 108/255 & (108/255) \\ \hline
        \multirow{5}{*}{LipConvnet-5} & BCOP & 74.25\% & 58.01\% & 40.34\% & 25.21\% & -1.88\% \\
        & Cayley & 72.37\% & 55.92\% & 38.65\% & 24.27\% & -2.82\% \\ 
         & SOC (baseline) & 75.78\% & 59.18\% & 42.01\% & 27.09\% & (+0\%) \\ 
         \cline{2-7}
         & \textbf{SOC + }$\mathbf{ HH}$ & \textbf{76.30\%} & 60.12\% & 43.20\% & 27.39\% & +0.30\% \\ 
         & \textbf{\qquad \ + CR} & 75.31\% & \textbf{60.37\%} & \textbf{45.62\%} & \textbf{32.38\%} & \textbf{+5.29\%} \\
         \hline
        \multirow{5}{*}{LipConvnet-10} & BCOP & 74.47\% & 58.48\% & 40.77\% & 26.16\% & -2.99\% \\          & Cayley & 74.30\% & 57.99\% & 40.75\% & 25.93\% & -3.22\% \\ 
         & SOC (baseline) & 76.45\% & 60.86\% & 44.15\% & 29.15\% & (+0\%)\\ 
         \cline{2-7}
         & \textbf{SOC + }$\mathbf{ HH}$ & \textbf{76.86\%} & 61.52\% & 44.91\% & 29.90\% & +0.75\% \\ 
         & \textbf{\qquad \ + CR} & 76.23\% & \textbf{62.57\%} & \textbf{47.70\%} & \textbf{34.15\%} & \textbf{+5.00\%} \\ 
        \hline
        \multirow{5}{*}{LipConvnet-15} & BCOP & 73.86\% & 57.39\% & 39.33\% & 24.86\% & -4.80\% \\          & Cayley & 71.92\% & 54.55\% & 37.67\% & 23.50\% & -6.16\% \\ 
         & SOC (baseline) & 76.68\% & 61.36\% & 44.28\% & 29.66\% & (+0\%) \\ 
        \cline{2-7}
         & \textbf{SOC + }$\mathbf{ HH}$ & \textbf{77.41\%} & 61.92\% & 45.60\% & 31.10\% & +1.44\% \\ 
         & \textbf{\qquad \ + CR} & 76.39\% & \textbf{62.96\%} & \textbf{48.47\%} & \textbf{35.47\%} & \textbf{+5.81\%} \\ 
         \hline
        \multirow{5}{*}{LipConvnet-20} & BCOP & 69.84\% & 52.10\% & 34.75\% & 21.09\% & -9.99\% \\          & Cayley & 68.87\% & 51.88\% & 35.56\% & 21.72\% & -9.36\% \\ 
         & SOC (baseline) & \textbf{76.90\%} & 61.87\% & 45.79\% & 31.08\% & (+0\%) \\ 
         \cline{2-7}
         & \textbf{SOC + }$\mathbf{ HH}$ & 76.99\% & 61.76\% & 45.59\% & 30.99\% & -0.09\% \\
         & \textbf{\qquad \ + CR} & 76.34\% & \textbf{62.63\%} & \textbf{48.69\%} & \textbf{36.04\%} & \textbf{+4.96\%} \\ 
         \hline
    \end{tabular}
    \vskip 0.1cm
    \caption{Results for provable robustness on the CIFAR-10 dataset using shallow networks. \\
    None of these results include Last Layer Normalization (LLN).}
    \label{table:provable_robust_acc_cifar10_shallow}
\end{table}

\begin{table}
    \centering
    \renewcommand{\arraystretch}{1.3} 
    \begin{tabular}{l l|l l l l l}
    \hline
        \multirow{2}{*}{\textbf{Architecture}} & \multirow{2}{*}{\textbf{Methods}} & \multirow{2}{1.5cm}{\textbf{Standard Accuracy}} & \multicolumn{3}{c}{\textbf{Provable Robust Acc.} ($\rho=$)} & \multirow{1}{1.5cm}{\textbf{Increase}}\\ 
         &  &  & 36/255 & 72/255 & 108/255 & (108/255) \\ \hline
        \multirow{5}{*}{LipConvnet-25} & BCOP & 68.47\% & 49.92\% & 31.99\% & 18.62\% & -9.98\% \\
         & Cayley & 64.00\% & 45.55\% & 29.24\% & 16.99\% & -11.61\% \\
         & SOC (baseline) & 75.24\% & 60.17\% & 43.55\% & 28.60\% & (+0\%) \\ \cline{2-7}
         &  \textbf{SOC + }$\mathbf{ HH}$ & \textbf{76.37\%} & 61.50\% & 44.72\% & 29.83\% & +1.23\% \\ 
         & \qquad\ \textbf{+ CR} & 75.21\% & \textbf{61.98\%} & \textbf{47.93\%} & \textbf{34.92\%} & \textbf{+6.32\%} \\ 
         \hline
        \multirow{5}{*}{LipConvnet-30} & BCOP & 64.11\% & 43.39\% & 25.02\% & 12.15\% & -15.90\% \\ 
         & Cayley & 58.83\% & 38.68\% & 22.07\% & 10.68\% & -17.37\% \\ 
         & SOC (baseline) & 74.51\% & 59.06\% & 42.46\% & 28.05\% & (+0\%) \\ 
         \cline{2-7}
         &  \textbf{SOC + }$\mathbf{ HH}$ & \textbf{75.25\%} & 59.90\% & 43.85\% & 29.35\% & +1.30\% \\ 
         & \qquad\ \textbf{+ CR} & 74.23\% & \textbf{60.64\%} & \textbf{46.51\%} & \textbf{34.08\%} & \textbf{+6.03\%} \\          \hline
        \multirow{5}{*}{LipConvnet-35} & BCOP & 63.05\% & 41.71\% & 23.30\% & 11.36\% & -15.84\% \\ 
         & Cayley & 53.55\% & 32.37\% & 16.18\% & 6.33\% & -20.87\% \\ 
         & SOC (baseline) & 73.73\% & 58.50\% & 41.75\% & 27.20\% & (+0\%) \\ 
         \cline{2-7}
         & \textbf{SOC + }$\mathbf{ HH}$ & \textbf{75.44\%} & 61.05\% & 45.38\% & 30.85\% & +3.65\% \\ 
         & \qquad\ \textbf{+ CR} & 74.25\% & \textbf{61.30\%} & \textbf{47.60\%} & \textbf{35.21\%} & \textbf{+8.01\%} \\ 
        \hline
        \multirow{5}{*}{LipConvnet-40} & BCOP & 60.17\% & 38.86\% & 21.20\% & 9.08\% & -15.05\% \\ 
         & Cayley & 51.26\% & 27.90\% & 12.06\% & 3.81\% & -20.32\% \\ 
         & SOC (baseline) & 71.63\% & 54.39\% & 37.92\% & 24.13\% & (+0\%) \\ \cline{2-7}
         & \textbf{SOC + }$\mathbf{ HH}$ & \textbf{73.24\%} & 58.12\% & 42.24\% & 28.48\% & +4.35\% \\ 
         & \qquad\ \textbf{+ CR} & 72.59\% & \textbf{59.04\%} & \textbf{44.92\%} & \textbf{32.87\%} & \textbf{+8.74\%} \\
         \hline
    \end{tabular}
    \vskip 0.1cm
    \caption{Results for provable robustness against adversarial examples on the CIFAR-10 dataset.\\
    None of these results include Last Layer Normalization (LLN).}
    \label{table:provable_robust_acc_cifar10_deep}
\end{table}

\begin{table}
    \centering
    \renewcommand{\arraystretch}{1.3} 
    \begin{tabular}{l l|l l l l l}
    \hline
        \multirow{2}{*}{\textbf{Architecture}} & \multirow{2}{*}{\textbf{Methods}} & \multirow{2}{1.5cm}{\textbf{Standard Accuracy}} & \multicolumn{3}{c}{\textbf{Provable Robust Acc.} ($\rho=$)} & \multirow{1}{1.5cm}{\textbf{Increase}}\\ 
         &  &  & 36/255 & 72/255 & 108/255 & (108/255) \\ \hline
         \multirow{2}{*}{LipConvnet-5} & \textbf{SOC + }$\mathbf{ HH^{(2)}}$  & 75.85\% & 59.66\% & 42.68\% & 27.44\% & +0.35\% \\ 
         & \textbf{\qquad \ + CR} & 74.85\% & 60.56\% & 44.96\% & 31.98\% & +4.59\% \\ 
        \hline
        \multirow{2}{*}{LipConvnet-10} & \textbf{SOC + }$\mathbf{ HH^{(2)}}$ & 76.80\% & 61.44\% & 44.91\% & 29.70\% & +0.55\% \\ 
         & \textbf{\qquad \ + CR} & 76.30\% & 62.11\% & 47.85\% & 34.49\% & +5.34\% \\ 
        \hline
        \multirow{2}{*}{LipConvnet-15} & \textbf{SOC + }$\mathbf{ HH^{(2)}}$ & 77.41\% & 62.21\% & 45.11\% & 30.49\% & +0.83\% \\ 
         & \textbf{\qquad \ + CR} & 75.73\% & 62.21\% & 47.92\% & 35.26\% & +5.60\% \\ 
         \hline
        \multirow{2}{*}{LipConvnet-20} & \textbf{SOC + }$\mathbf{ HH^{(2)}}$ & 76.69\% & 61.58\% & 45.39\% & 30.89\% & -0.19\% \\ 
         & \textbf{\qquad \ + CR} & 75.72\% & 62.61\% & 48.30\% & 35.29\% & +4.21\% \\ 
         \hline
        \multirow{2}{*}{LipConvnet-25} & \textbf{SOC + }$\mathbf{ HH^{(2)}}$ & 76.12\% & 61.24\% & 44.81\% & 29.63\% & +1.03\% \\ 
         & \textbf{\qquad \ + CR} & 75.38\% & 61.94\% & 47.67\% & 34.22\% & +5.62\% \\ 
         \hline
        \multirow{2}{*}{LipConvnet-30} & \textbf{SOC + }$\mathbf{ HH^{(2)}}$ & 75.09\% & 60.01\% & 44.22\% & 29.39\% & +1.34\% \\ 
         & \textbf{\qquad \ + CR} & 74.88\% & 61.23\% & 46.63\% & 34.02\% & +5.97\% \\ 
         \hline
        \multirow{2}{*}{LipConvnet-35} & \textbf{SOC + }$\mathbf{ HH^{(2)}}$ & 73.93\% & 58.61\% & 42.29\% & 28.47\% & +1.27\% \\ 
         & \textbf{\qquad \ + CR} & 74.14\% & 60.72\% & 46.67\% & 34.64\% & +7.44\% \\ 
        \hline
        \multirow{2}{*}{LipConvnet-40} & \textbf{SOC + }$\mathbf{ HH^{(2)}}$ & 70.90\% & 54.96\% & 38.94\% & 24.90\% & +0.77\% \\ 
         & \textbf{\qquad \ + CR} & 72.28\% & 57.67\% & 43.00\% & 30.66\% & +6.53\% \\ 
         \hline
    \end{tabular}
    \vskip 0.1cm
    \caption{Results for provable robustness on CIFAR-10 using $\mathrm{HH}$ activation of Order $2$ ($\sigma_{\Theta}$). \\
    Increase (108/255) is calculated with respect to SOC baseline (from Tables \ref{table:provable_robust_acc_cifar10_shallow}, \ref{table:provable_robust_acc_cifar10_deep}).\\
    None of these results include Last Layer Normalization (LLN).}
    \label{table:provable_robust_acc_cifar10_hh2}
\end{table}

\begin{table}[h!]
    \centering
    \renewcommand{\arraystretch}{1.3} 
    \begin{tabular}{l l|l l l l l }
    \hline
         \textbf{Architecture} & \textbf{Methods} & \multirow{2}{1.5cm}{\textbf{Standard Accuracy}} & \multicolumn{3}{c}{\textbf{Provable Robust Acc.} ($\rho=$)} & \textbf{Increase} \\ 
         &  &  & (36/255) & (72/255) & (108/255) & (Standard) \\ \hline
        \multirow{2}{*}{LipConvnet-5} & SOC (no LLN) & 75.78\% & 59.18\% & 42.01\% & 27.09\% & (+0\%) \\ 
         & \textbf{SOC + LLN} & \textbf{75.78\%} & \textbf{59.58\%} & \textbf{42.45\%} & \textbf{27.20\%} & \textbf{+0.00\%} \\ \hline
        \multirow{2}{*}{LipConvnet-10} & SOC (no LLN) & 76.45\% & 60.86\% & 44.15\% & 29.15\% & (+0\%) \\ 
         & \textbf{SOC + LLN} & \textbf{76.69\%} & \textbf{61.08\%} & \textbf{44.04\%} & \textbf{29.19\%} & \textbf{+0.24\%} \\ \hline
        \multirow{2}{*}{LipConvnet-15} & SOC (no LLN) & 76.68\% & 61.36\% & 44.28\% & 29.66\% & (+0\%) \\ 
         & \textbf{SOC + LLN} & \textbf{76.84\%} & \textbf{61.94\%} & \textbf{45.51\%} & \textbf{30.28\%} & \textbf{+0.16\%} \\ \hline
        \multirow{2}{*}{LipConvnet-20} & SOC (no LLN) & \textbf{77.05\%} & \textbf{61.87\%} & \textbf{45.79\%} & \textbf{31.08}\% & \textbf{(+0\%)} \\ 
         & \textbf{SOC + LLN} & 76.71\% & 61.44\% & 44.92\% & 30.19\% & -0.34\% \\ \hline
        \multirow{2}{*}{LipConvnet-25} & SOC (no LLN) & 75.24\% & 60.17\% & 43.55\% & 28.60\% & (+0\%) \\ 
         & \textbf{SOC + LLN} & \textbf{76.54\%} & \textbf{61.21\%} & \textbf{44.18\%} & \textbf{29.47\%} & \textbf{+1.30\%} \\ \hline
        \multirow{2}{*}{LipConvnet-30} & SOC (no LLN) & \textbf{74.51\%} & \textbf{59.06\%} & \textbf{42.46\%} & \textbf{28.05\%} & \textbf{(+0\%)} \\ 
         & \textbf{SOC + LLN} & 74.26\% & 58.97\% & 41.82\% & 26.93\% & -0.25\% \\ \hline
        \multirow{2}{*}{LipConvnet-35} & SOC (no LLN) & 73.73\% & 58.50\% & 41.75\% & 27.20\% & (+0\%) \\ 
         & \textbf{SOC + LLN} & \textbf{74.32\%} & \textbf{59.05\%} & \textbf{42.34\%} & \textbf{28.14\%} & \textbf{+0.59\%} \\ \hline
        \multirow{2}{*}{LipConvnet-40} & SOC (no LLN) & 71.63\% & 54.39\% & 37.92\% & 24.13\% & (+0\%) \\ 
         & \textbf{SOC + LLN} & \textbf{74.03\%} & \textbf{58.27\%} & \textbf{41.75\%} & \textbf{27.12\%} & \textbf{+2.40\%} \\ \hline
    \end{tabular}
    \vskip 0.1cm
    \caption{Results for provable robustness on the CIFAR-10 dataset with and without LLN}
    \label{table:provable_robust_acc_cifar10_LLN}
\end{table}

\clearpage

\section{Additional Results on CIFAR-100}\label{appendix:results_cifar100}
All results in Tables \ref{table:provable_robust_acc_cifar100_shallow}, \ref{table:provable_robust_acc_cifar100_deep} and \ref{table:provable_robust_acc_cifar100_hh2} include Last Layer Normalization (LLN).

The rows "BCOP", "Cayley" and "SOC (baseline)" all use the $\mathrm{MaxMin}$ activation function. "SOC + $\mathrm{HH}$" replaces $\mathrm{MaxMin}$ with $2\mathrm{D}$ Householder activation of order $1$ ($\sigma_{\theta}$), "+ CR" adds Certificate Regularization (CR) with $\gamma=0.1$ (while using $\sigma_{\theta}$ as the activation function).

In Table \ref{table:provable_robust_acc_cifar100_hh2}, the row "SOC + $\mathrm{HH}^{(2)}$" uses Householder activation of order $2$ ($\sigma_{\Theta}$ defined in Theorem \ref{thm:high_order_hh}), "+ CR" adds Certificate Regularization (CR) with $\gamma=0.1$ (while using the $\mathrm{HH}$ activation of order $2$ i.e $\sigma_{\Theta}$ as the activation function).

\begin{table}[h!]
    \centering
    \renewcommand{\arraystretch}{1.3} 
    \begin{tabular}{l l|l l l l l}
    \hline
        \multirow{2}{*}{\textbf{Architecture}} & \multirow{2}{*}{\textbf{Methods}} & \multirow{2}{1.5cm}{\textbf{Standard Accuracy}} & \multicolumn{3}{c}{\textbf{Provable Robust Acc.} ($\rho=$)} & \multirow{1}{1.5cm}{\textbf{Increase}}\\ 
         &  &  & 36/255 & 72/255 & 108/255 & (Standard) \\ \hline
        \multirow{4}{*}{LipConvnet-5} & SOC + $\mathrm{MaxMin}$ & 42.71\% & 27.86\% & 17.45\% & 9.99\% & \_ \\
         & \qquad\ \textbf{+ LLN} & 45.86\% & 31.93\% & 21.17\% & 13.21\% & +3.15\% \\ 
         & \qquad\ \textbf{+} $\mathbf{ HH}$ & \textbf{46.36\%} & 32.64\% & 21.19\% & 13.12\% & +\textbf{3.65\%} \\ 
         & \qquad\ \textbf{+ CR} & 45.82\% & \textbf{32.99\%} & \textbf{22.48\%} & \textbf{14.79\%} & +3.11\% \\ \hline
        \multirow{4}{*}{LipConvnet-10} & SOC + $\mathrm{MaxMin}$ & 43.72\% & 29.39\% & 18.56\% & 11.16\% & \_ \\
         & \qquad\ \textbf{+ LLN} & 46.88\% & 33.32\% & 22.08\% & 13.87\% & +3.16\% \\
         & \qquad\ \textbf{+} $\mathbf{ HH}$ & \textbf{47.96\%} & 34.30\% & 22.35\% & 14.48\% & +\textbf{4.24\%} \\ 
         & \qquad\ \textbf{+ CR} & 47.07\% & \textbf{34.53\%} & \textbf{23.50\%} & \textbf{15.66\%} & +3.35\% \\ \hline
    \end{tabular}
    \vskip 0.1cm
    \caption{Results for provable robustness against adversarial examples on the CIFAR-100 dataset.}
    \label{table:provable_robust_acc_cifar100_appendix}
\end{table}

\begin{table}
    \centering
    \renewcommand{\arraystretch}{1.3} 
    \begin{tabular}{ l| l l l }
    \hline
        \multirow{2}{*}{\textbf{Architecture}} & \multicolumn{3}{c}{\textbf{Running times (in seconds)}} \\
         & \textbf{MaxMin (no LLN)} & \textbf{MaxMin (LLN)} & \textbf{HH (LLN)} \\ \hline
        LipConvnet-5 & 3.7568 & 3.5002 & 3.6673 \\ 
        LipConvnet-10 & 5.3714 & 5.5482 & 5.5533 \\ 
        LipConvnet-15 & 7.3092 & 7.2595 & 7.3127 \\ 
        LipConvnet-20 & 9.005 & 9.2043 & 9.308 \\ 
        LipConvnet-25 & 10.9321 & 10.7868 & 11.726 \\ 
        LipConvnet-30 & 12.3198 & 13.2168 & 13.6275 \\ 
        LipConvnet-35 & 14.427 & 14.575 & 15.7069 \\ 
        LipConvnet-40 & 16.0911 & 16.2535 & 17.1015 \\ \hline
    \end{tabular}
    \vskip 0.1cm
    \caption{
    Inference times for various networks on the CIFAR-100 test dataset. Similar to CIFAR-10 (in Table \ref{table:provable_robust_acc_cifar10_timings}), these numbers are for the whole test dataset with $10000$ samples.}
    \label{table:provable_robust_acc_cifar100_timings}
\end{table}

\begin{table}[h!]
    \centering
    \renewcommand{\arraystretch}{1.3} 
    \begin{tabular}{l l|l l l l l}
    \hline
        \textbf{Architecture} & \textbf{Methods} & \multirow{2}{1.5cm}{\textbf{Standard Accuracy}} & \multicolumn{3}{c}{\textbf{Provable Robust Acc.} ($\rho=$)} & \textbf{Increase} \\ 
         &  &  & 36/255 & 72/255 & 108/255 & (108/255) \\ \hline
        \multirow{5}{*}{LipConvnet-5} & BCOP & 46.31\% & 31.55\% & 20.34\% & 12.52\% & -0.69\% \\ 
         & Cayley & 44.61\% & 31.01\% & 19.84\% & 12.43\% & -0.78\% \\ 
         & SOC (baseline) & 45.86\% & 31.93\% & 21.17\% & 13.21\% & (+0\%) \\ \cline{2-7}
         & \textbf{SOC + }$\mathbf{ HH}$ & \textbf{46.36\%} & 32.64\% & 21.19\% & 13.12\% & -0.09\% \\ 
         & \textbf{\qquad \ + CR} & 45.82\% & \textbf{32.99\%} & \textbf{22.48\%} & \textbf{14.79\%} & \textbf{+1.58\%} \\ \hline
        \multirow{5}{*}{LipConvnet-10} & BCOP & 45.36\% & 31.71\% & 20.48\% & 12.40\% & -1.47\% \\ 
         & Cayley & 45.79\% & 31.91\% & 20.69\% & 12.78\% & -1.09\% \\ 
         & SOC (baseline) & 46.88\% & 33.32\% & 22.08\% & 13.87\% & (+0\%) \\ \cline{2-7}
         & \textbf{SOC + }$\mathbf{ HH}$ & \textbf{47.96\%} & 34.30\% & 22.35\% & 14.48\% & +0.61\% \\ 
         & \textbf{\qquad \ + CR} & 47.07\% & \textbf{34.53\%} & \textbf{23.50\%} & \textbf{15.66\%} & \textbf{+1.79\%} \\ \hline
        \multirow{5}{*}{LipConvnet-15} & BCOP & 43.70\% & 30.11\% & 19.85\% & 12.29\% & -1.47\% \\ 
         & Cayley & 45.05\% & 31.60\% & 20.32\% & 12.93\% & -0.83\% \\ 
         & SOC (baseline) & 47.72\% & 33.52\% & 21.89\% & 13.76\% & (+0\%) \\ \cline{2-7}
         & \textbf{SOC + }$\mathbf{ HH}$ & \textbf{47.72\%} & 33.97\% & 22.45\% & 13.81\% & +0.05\% \\ 
         & \textbf{\qquad \ + CR} & 47.61\% & \textbf{34.54\%} & \textbf{23.16\%} & \textbf{15.09\%} & \textbf{+1.33\%} \\ \hline
        \multirow{5}{*}{LipConvnet-20} & BCOP & 39.77\% & 27.20\% & 17.44\% & 10.49\% & -3.61\% \\ 
         & Cayley & 39.68\% & 26.93\% & 17.06\% & 10.48\% & -3.62\% \\ 
         & SOC (baseline) & 46.86\% & 33.48\% & 22.14\% & 14.10\% & (+0\%) \\ \cline{2-7}
         & \textbf{SOC + }$\mathbf{ HH}$ & 47.71\% & 34.22\% & 22.93\% & 14.57\% & +0.47\% \\ 
         & \textbf{\qquad \ + CR} & \textbf{47.84\%} & \textbf{34.77\%} & \textbf{23.70\%} & \textbf{15.84\%} & \textbf{+1.74\%} \\ \hline
    \end{tabular}
    \vskip 0.1cm
    \caption{Results for provable robustness on the CIFAR-100 dataset using shallow networks.\\
    All of these results include Last Layer Normalization (LLN).}
    \label{table:provable_robust_acc_cifar100_shallow}
\end{table}

\begin{table}
    \centering
    \renewcommand{\arraystretch}{1.3} 
    \begin{tabular}{l l|l l l l l}
    \hline
        \textbf{Architecture} & \textbf{Methods} & \multirow{2}{1.5cm}{\textbf{Standard Accuracy}} & \multicolumn{3}{c}{\textbf{Provable Robust Acc.} ($\rho=$)} & \textbf{Increase} \\ 
         &  &  & 36/255 & 72/255 & 108/255 & (108/255) \\ \hline
        \multirow{5}{*}{LipConvnet-25} & BCOP & 34.15\% & 21.57\% & 13.52\% & 7.97\% & -5.89\% \\ 
         & Cayley & 33.93\% & 21.93\% & 13.68\% & 8.19\% & -5.67\% \\ 
         & SOC (baseline) & 46.32\% & 32.87\% & 21.53\% & 13.86\% & (+0\%) \\ \cline{2-7}
         & \textbf{SOC + }$\mathbf{ HH}$  & \textbf{47.70\%} & 34.00\% & 22.67\% & 14.57\% & +0.71\% \\ 
         & \qquad\ \textbf{+ CR} & 46.87\% & \textbf{34.09\%} & \textbf{23.41\%} & \textbf{15.61\%} & \textbf{+1.75\%} \\ \hline
        \multirow{5}{*}{LipConvnet-30} & BCOP & 29.73\% & 18.69\% & 10.80\% & 6\% & -7.52\% \\ 
         & Cayley & 28.67\% & 18.05\% & 10.43\% & 6.09\% & -7.43\% \\ 
         & SOC (baseline) & 46.18\% & 32.82\% & 21.52\% & 13.52\% & (+0\%) \\ \cline{2-7}
         & \textbf{SOC + }$\mathbf{ HH}$ & 46.80\% & 33.72\% & 22.70\% & 14.31\% & +0.79\% \\ 
         & \qquad\ \textbf{+ CR} & \textbf{46.92\%} & \textbf{34.17\%} & \textbf{23.21\%} & \textbf{15.84\%} & \textbf{+2.32\%} \\ \hline
        \multirow{5}{*}{LipConvnet-35} & BCOP & 25.65\% & 14.88\% & 8.47\% & 4.30\% & -8.95\% \\ 
         & Cayley & 27.75\% & 16.37\% & 9.52\% & 5.40\% & -7.85\% \\ 
         & SOC (baseline) & 45.22\% & 32.10\% & 21.28\% & 13.25\% & (+0\%) \\ \cline{2-7}
         & \textbf{SOC + }$\mathbf{ HH}$ & 46.21\% & 32.80\% & 21.55\% & 14.13\% & +0.88\% \\ 
         & \qquad\ \textbf{+ CR} & \textbf{46.88\%} & \textbf{33.64\%} & \textbf{23.34\%} & \textbf{15.73\%} & \textbf{+2.48\%} \\ \hline
        \multirow{5}{*}{LipConvnet-40} & BCOP & 30.66\% & 18.68\% & 10.46\% & 5.92\% & -6.07\% \\ 
         & Cayley & 25.54\% & 14.91\% & 8.37\% & 4.40\% & -7.59\% \\ 
         & SOC (baseline) & 42.94\% & 29.71\% & 19.30\% & 11.99\% & (+0\%) \\ \cline{2-7}
         & \textbf{SOC + }$\mathbf{ HH}$ & \textbf{45.84\%} & \textbf{32.79\%} & 21.98\% & 14.07\% & +2.08\% \\ 
         & \qquad\ \textbf{+ CR} & 45.03\% & 32.57\% & \textbf{22.37\%} & \textbf{14.76\%} & \textbf{+2.77\%} \\ \hline
    \end{tabular}
    \vskip 0.1cm
    \caption{Results for provable robustness on the CIFAR-100 dataset using deeper networks. \\
    All of these results include Last Layer Normalization (LLN).}
    \label{table:provable_robust_acc_cifar100_deep}
\end{table}

\begin{table}
    \centering
    \renewcommand{\arraystretch}{1.3} 
    \begin{tabular}{l l|l l l l l}
    \hline
        \multirow{2}{*}{\textbf{Architecture}} & \multirow{2}{*}{\textbf{Methods}} & \multirow{2}{1.5cm}{\textbf{Standard Accuracy}} & \multicolumn{3}{c}{\textbf{Provable Robust Acc.} ($\rho=$)} & \multirow{1}{1.5cm}{\textbf{Increase}}\\ 
         &  &  & 36/255 & 72/255 & 108/255 & (108/255) \\ \hline
         \multirow{2}{*}{LipConvnet-5} & \textbf{SOC + }$\mathbf{ HH^{(2)}}$ & 46.61\% & 32.50\% & 21.34\% & 13.22\% & +0.01\% \\ 
         & \textbf{\qquad \ + CR} & \textbf{46.69\%} & \textbf{33.22\%} & \textbf{22.34\%} & \textbf{14.30\%} & \textbf{+1.09\%} \\ 
        \hline
        \multirow{2}{*}{LipConvnet-10} & \textbf{SOC + }$\mathbf{ HH^{(2)}}$ & 47.47\% & 33.32\% & 21.84\% & 13.75\% & -0.01\% \\ 
          & \textbf{\qquad \ + CR} & \textbf{47.53\%} & \textbf{34.52\%} & \textbf{23.06\%} & \textbf{15.07\%} & \textbf{+1.31\%} \\ \hline
        \multirow{2}{*}{LipConvnet-15} & \textbf{SOC + }$\mathbf{ HH^{(2)}}$ & 47.19\% & 33.67\% & 22.36\% & 13.78\% & -0.09\% \\ 
         & \textbf{\qquad \ + CR} & \textbf{47.22\%} & \textbf{34.04\%} & \textbf{22.98\%} & \textbf{15.28\%} & \textbf{+1.41\%} \\ \hline
        \multirow{2}{*}{LipConvnet-20} & \textbf{SOC + }$\mathbf{ HH^{(2)}}$ & \textbf{47.86\%} & 33.93\% & 22.44\% & 14.41\% & +0.31\% \\
         & \textbf{\qquad \ + CR} & 47.54\% & \textbf{34.32\%} & \textbf{23.53\%} & \textbf{15.54\%} & \textbf{+1.44\%} \\ \hline
        \multirow{2}{*}{LipConvnet-25} & \textbf{SOC + }$\mathbf{ HH^{(2)}}$ & \textbf{47.86\%} & 33.97\% & 22.78\% & 14.59\% & +0.73\% \\ 
         & \textbf{\qquad \ + CR} & 47.50\% & \textbf{34.38\%} & \textbf{23.92\%} & \textbf{15.92\%} & \textbf{+2.06\%} \\ \hline
         \multirow{2}{*}{LipConvnet-30} & \textbf{SOC + }$\mathbf{ HH^{(2)}}$ & 46.23\% & 32.64\% & 21.95\% & 14.00\% & +0.48\% \\
         & \textbf{\qquad \ + CR} & \textbf{46.36\%} & \textbf{33.20\%} & \textbf{22.70\%} & \textbf{14.85\%} & \textbf{+1.33\%} \\ \hline
         \multirow{2}{*}{LipConvnet-35} & \textbf{SOC + }$\mathbf{ HH^{(2)}}$ & \textbf{46.06\%} & 32.35\% & 21.33\% & 13.65\% & +0.40\% \\ 
         & \textbf{\qquad \ + CR} & 45.78\% & \textbf{33.24\%} & \textbf{22.39\%} & \textbf{14.78\%} & \textbf{+1.53\%} \\ \hline
          \multirow{2}{*}{LipConvnet-40} & \textbf{SOC + }$\mathbf{ HH^{(2)}}$ & 43.81\% & 30.59\% & 20.08\% & 12.56\% & +0.57\% \\ 
         & \textbf{\qquad \ + CR} & \textbf{45.61\%} & \textbf{32.50\%} & \textbf{22.36\%} & \textbf{14.84\%} & \textbf{+2.85\%} \\ \hline
    \end{tabular}
    \vskip 0.1cm
    \caption{Results for provable robustness on CIFAR-100 using $\mathrm{HH}$ activation of Order $2$ ($\sigma_{\Theta}$). \\
    Increase (108/255) is calculated with respect to SOC baseline (from Tables \ref{table:provable_robust_acc_cifar100_shallow}, \ref{table:provable_robust_acc_cifar100_deep}).\\
    All of these results include Last Layer Normalization (LLN).}
    \label{table:provable_robust_acc_cifar100_hh2}
\end{table}

\begin{table}
    \centering
    \renewcommand{\arraystretch}{1.3} 
    \begin{tabular}{l l|l l l l l}
    \hline
         \textbf{Architecture} & \textbf{Methods} & \multirow{2}{1.5cm}{\textbf{Standard Accuracy}} & \multicolumn{3}{c}{\textbf{Provable Robust Acc.} ($\rho=$)} & \textbf{Increase} \\ 
         &  &  & (36/255) & (72/255) & (108/255) & (Standard) \\ \hline
        \multirow{2}{*}{LipConvnet-5} & SOC (no LLN) & 42.71\% & 27.86\% & 17.45\% & 9.99\% & (+0\%) \\ 
         & \textbf{SOC + LLN} & \textbf{45.86\%} & \textbf{31.93\%} & \textbf{21.17\%} & \textbf{13.21\%} & \textbf{+3.15\%} \\ \hline
        \multirow{2}{*}{LipConvnet-10} & SOC (no LLN) & 43.72\% & 29.39\% & 18.56\% & 11.16\% & (+0\%) \\ 
         & \textbf{SOC + LLN} & \textbf{46.88\%} & \textbf{33.32\%} & \textbf{22.08\%} & \textbf{13.87\%} & \textbf{+3.16\%} \\ \hline
        \multirow{2}{*}{LipConvnet-15} & SOC (no LLN) & 42.92\% & 28.81\% & 17.93\% & 10.73\% & (+0\%) \\ 
         & \textbf{SOC + LLN} & \textbf{47.72\%} & \textbf{33.52\%} & \textbf{21.89\%} & \textbf{13.76\%} & \textbf{+4.80\%} \\ \hline
        \multirow{2}{*}{LipConvnet-20} & SOC (no LLN) & 43.06\% & 29.34\% & 18.66\% & 11.20\% & (+0\%) \\ 
         & \textbf{SOC + LLN} & \textbf{46.86\%} & \textbf{33.48\%} & \textbf{22.14\%} & \textbf{14.10\%} & \textbf{+3.80\%} \\ \hline
        \multirow{2}{*}{LipConvnet-25} & SOC (no LLN) & 43.37\% & 28.59\% & 18.18\% & 10.85\% & (+0\%) \\ 
         & \textbf{SOC + LLN} & \textbf{46.32\%} & \textbf{32.87\%} & \textbf{21.53\%} & \textbf{13.86\%} & \textbf{+2.95\%} \\ \hline
        \multirow{2}{*}{LipConvnet-30} & SOC (no LLN) & 42.87\% & 28.74\% & 18.47\% & 11.21\% & (+0\%) \\ 
         & \textbf{SOC + LLN} & \textbf{46.18\%} & \textbf{32.82\%} & \textbf{21.52\%} & \textbf{13.52\%} & \textbf{+3.31\%} \\ \hline
        \multirow{2}{*}{LipConvnet-35} & SOC (no LLN) & 42.42\% & 28.34\% & 18.10\% & 10.96\% & (+0\%) \\ 
         & \textbf{SOC + LLN} & \textbf{45.22\%} & \textbf{32.10\%} & \textbf{21.28\%} & \textbf{13.25\%} & \textbf{+2.80\%} \\ \hline
        \multirow{2}{*}{LipConvnet-40} & SOC (no LLN) & 41.84\% & 28.00\% & 17.40\% & 10.28\% & (+0\%) \\ 
         & \textbf{SOC + LLN} & \textbf{42.94\%} & \textbf{29.71\%} & \textbf{19.30\%} & \textbf{11.99\%} & \textbf{+1.10\%} \\ \hline
    \end{tabular}
    \vskip 0.1cm
    \caption{Results for provable robustness on the CIFAR-100 dataset with and without LLN}
    \label{table:provable_robust_acc_cifar100_LLN}
\end{table}

\end{document}